\documentclass{article}

\usepackage{evotreenad_preprint,times}

\usepackage[utf8]{inputenc} 
\usepackage[T1]{fontenc}    
\usepackage{graphicx}       
\usepackage[hidelinks]{hyperref}       
\usepackage{url}            
\usepackage{booktabs}       
\usepackage{amsmath}        
\usepackage{amsfonts}       
\usepackage{amssymb}        
\usepackage{amsthm}         
\usepackage{algorithm}      
\usepackage{algorithmic}    
\usepackage{fancyvrb}       
\usepackage{fvextra}        
\usepackage{multirow}       
\usepackage{subcaption}     
\usepackage{placeins}       
\usepackage{nicefrac}       
\usepackage{microtype}      
\usepackage[table]{xcolor}  

\makeatletter
\renewcommand{\paragraph}{\@startsection{paragraph}{4}{\z@}
  {0.4ex plus 0.1ex minus 0.1ex}{-1em}{\normalsize\bfseries}}
\makeatother

\definecolor{OursBlue}{HTML}{E7F1FB}
\definecolor{StdBlueGray}{HTML}{5F6F86}
\newcommand{\std}[1]{{\color{StdBlueGray}\tiny $\pm$#1}}

\DefineVerbatimEnvironment{CodeBlock}{Verbatim}{fontsize=\scriptsize,breaklines=true,breakanywhere=true}

\newtheorem{theorem}{Theorem}
\newtheorem{lemma}{Lemma}
\newtheorem{proposition}{Proposition}

\title{{\fontsize{16}{19}\selectfont EvoTreeNAD: Genealogy-Guided Evolution for LLM-Driven Neural Architecture Discovery}}

\author{Lishan Yu, Derek Jiu, Qizhen Lan, Xiaoqian Jiang\\
Department of Health Data Science and AI\\
McWilliams School of Biomedical Informatics\\
University of Texas Health Science Center at Houston}

\hypersetup{pdfauthor={Lishan Yu, Derek Jiu, Qizhen Lan, Xiaoqian Jiang}}

\hypersetup{pdftitle={EvoTreeNAD: Genealogy-Guided Evolution for LLM-Driven Neural Architecture Discovery}}

\begin{document}
\raggedbottom

\maketitle
\fancyhead{}
\renewcommand{\headrulewidth}{0pt}

\begin{abstract}

AI-driven scientific discovery accelerates research by autonomously
developing solutions and designs. Large language model (LLM) agents support
this process through iterative generation and evaluation. Yet these iterations
alone do not ensure cumulative progress or establish which directions to
pursue next. Costly evaluation further constrains the scope of exploration.
Neural architecture discovery brings these challenges together, coupling
open-ended design with resource-intensive experimentation.
We introduce \textbf{EvoTreeNAD}, a genealogy-guided evolutionary algorithm
that constructs trainable architectures without a supplied seed or a
hand-specified search space. Starting from an empty root, it grows a persistent
genealogy in which each new node represents a complete architecture.
Top-percentile values computed from each node and its descendants guide lineage
selection. Using the selected design history, an Idea Agent proposes a variant
and a Code Agent implements it. Each evaluated variant becomes a child node,
expanding the genealogy while providing evidence for subsequent lineage selection.
Our theoretical analysis establishes the existence of stationary variation regimes as the genealogy grows.
Under specified variation assumptions, sustained top-percentile family values
quantify the probability of generating high-reward architectures in these regimes.
EvoTreeNAD discovers architectures that outperform the compared NAS and NAD
baselines, achieving CIFAR-10/100 test errors of \(2.05{\pm}0.06\%\) and
\(15.09{\pm}0.22\%\). On all six MedMNIST-v2 tasks, the discovered
architectures surpass the strongest listed baselines.
A controlled CIFAR-10 study further shows that EvoTreeNAD outperforms
direct generation, best-of-\(N\) greedy continuation,
and full-family-mean routing.

\end{abstract}


\section{Introduction}

Large language model (LLM) agents are advancing AI-driven scientific discovery
and automated research by generating, implementing, and evaluating candidate
solutions with limited human intervention
\citep{lu2024aiscientist,novikov2025alphaevolve}.
Yet the ability to repeat this cycle does not ensure that individual findings
accumulate into sustained progress or establish which research directions to
pursue next. Costly evaluation makes both challenges more consequential,
limiting both the breadth of exploration and the depth to which promising
directions can be developed \citep{jiang2025aide}.

Existing approaches organize iterative design through repeated refinement,
candidate selection, and evolutionary search over evaluated solutions
\citep{novikov2025alphaevolve,jiang2025aide}.
These strategies support continued improvement, but candidate quality alone
does not fully reveal the potential for further development.
Strong designs may yield weak variants, while initially less successful designs
may become competitive through further modification.
Sustained discovery therefore requires building on successful designs while
using accumulated experimental evidence to revise earlier development choices.

Neural architecture discovery (NAD) makes this challenge concrete.
Recent LLM-based methods move beyond predefined search spaces by generating
and iteratively refining executable architectures
\citep{chen2023evoprompting,yang2025nader}.
Yet empirical evaluation of newly generated architectures can require
substantial computation, making it critical to decide not only what to
generate next, but which directions remain worth pursuing.

We introduce \textbf{EvoTreeNAD}, a genealogy-guided evolutionary algorithm
that treats architecture discovery as the continued development and
reassessment of design lineages.
EvoTreeNAD preserves complete trainable architectures and their branching
design histories in a persistent genealogy.
Starting from an empty root, it constructs architectures without a supplied
seed architecture or hand-specified search space.
Top-percentile family values summarize the outcomes accumulated by each node
and its descendants and guide the selection of a root-to-node lineage for
further evolution.
Using the selected design history, an Idea Agent proposes the next
architectural design or modification, and a Code Agent implements it.
Each evaluated variant extends the genealogy, contributing both a design for
further evolution and new evidence for subsequent lineage selection.
Successful architectures can thus support continued evolution while later
evidence can redirect development toward retained alternatives.

Our theoretical analysis establishes the existence of stationary empirical
variation regimes as the genealogy grows.
Under specified assumptions on lineage-conditioned variation, sustained
top-percentile family values exactly characterize the probability of
generating high-reward architectures in these regimes.
We further quantify how weak outcomes affect full-family means and establish
a pathwise equivalence between maximum-family routing and best-of-\(N\)
greedy continuation, a strategy used in automated design
\citep{ma2024eureka}.
Empirically, EvoTreeNAD discovers architectures that outperform the compared
NAS and NAD baselines on CIFAR-10/100 under fixed full-fidelity training.
The discovered architectures also surpass the strongest listed baseline on
each of six MedMNIST-v2 tasks spanning diverse medical imaging modalities
across 2D image and 3D volume classification.
Controlled CIFAR-10 experiments further demonstrate its advantage over
direct generation, best-of-\(N\) greedy continuation, and full-family-mean routing.

Our contributions are summarized as follows.
\begingroup
\setlength{\leftmargini}{1.5em}
\setlength{\labelsep}{0.4em}
\setlength{\labelwidth}{1.1em}
\begin{itemize}
\setlength{\itemsep}{1.2pt}
\setlength{\parskip}{0pt}
\setlength{\parsep}{0pt}
\setlength{\topsep}{1pt}
\item \textbf{Genealogy-guided architecture evolution.}
We introduce an evolutionary algorithm that preserves complete architectures
and their branching histories, combining lineage-conditioned LLM generation with
cumulative descendant evidence and reassessment of earlier design choices.
\item \textbf{Theoretical characterization of genealogy-guided evolution.}
We establish stationary empirical variation regimes for adaptive genealogy
growth and, under specified lineage-conditioned variation assumptions,
derive an exact relation between sustained top-percentile family values and
the long-run probability of generating high-reward architectures in these
regimes.
\item \textbf{Architecture performance and controlled evaluation.}
We discover architectures that outperform the compared baselines across
CIFAR and six MedMNIST-v2 tasks and demonstrate the advantage of EvoTreeNAD
through controlled comparisons with direct generation, greedy continuation,
and full-family-mean routing.
\end{itemize}
\endgroup


\section{Related Work}

\paragraph{LLM agents for automated discovery.}
LLM agents extend research automation from proposing solutions to implementing
experiments, evaluating outcomes, and developing subsequent proposals.
OPRO uses evaluated solutions to inform new proposals, while the AI Scientist
automates idea generation, experimentation, and reporting
\citep{yang2024opro,lu2024aiscientist}.
Other systems explicitly organize multiple attempts and choose which to
develop further. AIDE retains candidate programs in a solution tree and
refines promising solutions using summaries of previous attempts
\citep{jiang2025aide}. AIRA examines the interaction between generation
operators and greedy, evolutionary, and Monte Carlo tree-search policies
\citep{toledo2025aira}. MLEvolve combines graph-based search with
descendant-reward propagation, cross-branch references, and retrospective memory
\citep{du2026mlevolve}.
EvoTreeNAD uses top-percentile family values to select a retained root-to-node
lineage whose design history supplies the context for the next architectural
variation.

\paragraph{Evolutionary discovery with LLMs.}
Evolution through Large Models uses LLMs as program mutation operators,
linking pretrained generative capabilities to evolutionary optimization
\citep{lehman2024elm}. Eureka evolves reward programs for reinforcement
learning through best-of-\(N\) refinement, using the best-performing candidate
to inform the next generation \citep{ma2024eureka}.
FunSearch and AlphaEvolve demonstrate mathematical and algorithmic discovery
through LLM-generated programs, programmatic evaluation, and evolutionary
selection \citep{romera2024funsearch,novikov2025alphaevolve}.
ShinkaEvolve develops sample-efficient program evolution, while the Darwin
G{\"o}del Machine maintains an expanding archive of self-modifying agents
from which further variants are generated
\citep{lange2026shinkaevolve,zhang2026dgm}.
In EvoTreeNAD, successful designs can be developed further while subsequent
descendant outcomes can redirect evolution toward retained alternatives.

\paragraph{LLM-based neural architecture discovery.}
Conventional NAS, including evolutionary approaches, typically optimizes
architectures within a specified search space
\citep{elsken2018efficient,real2019regularized}.
LLM-based methods extend architecture generation and modification through
executable code. EvoPrompting combines code-level variation with evolutionary
search, and LLMatic integrates architectural variation with quality-diversity
optimization \citep{chen2023evoprompting,nasir2024llmatic}.
LeMo-NADe generates architectures under user-defined requirements, while
GENIUS, LM-Searcher, and RZ-NAS combine LLM proposals with structured
architecture optimization or efficient evaluation
\citep{rahman2024lemonademultiparameterneuralarchitecture,
zheng2023can,hu2025lm,ji2025rznas}.
NADER develops a supplied initial architecture through multi-agent
modification and reflection; RevoNAD refines base architectures through
reflective exploration and multi-objective evolutionary selection
\citep{yang2025nader,chang2025revonad}.
Starting from an empty root, EvoTreeNAD unifies the construction of complete
trainable architectures with their subsequent lineage-conditioned evolution
in a persistent genealogy, without requiring a supplied seed architecture
or a hand-specified architecture search space.


\section{EvoTreeNAD}
\label{sec:methods}

EvoTreeNAD organizes architecture discovery through a persistent genealogy of
evaluated architectures and their parent--child relationships
(Figure~\ref{fig:evotree_overview}). At each iteration, top-percentile family
values computed from the current genealogy guide the selection of a root-to-node
lineage for further evolution. Using architectures and evaluation outcomes from
the selected lineage, an Idea Agent proposes an architectural design or
modification, and a Code Agent implements it as a complete trainable architecture.
After evaluation, the resulting child is added to the genealogy, providing both
a candidate for further variation and new evidence for subsequent lineage selection.

\subsection{Task Setting}
\label{sec:interface}
\label{sec:evotree_overview}

A task interface \(\mathcal S\) fixes the model inputs and outputs,
supervised target, data splits, discovery evaluator, and reporting protocol,
while leaving the architecture's internal modules, connectivity, and forward
computation to be determined through evolution. Here, an architecture denotes
the complete trainable model specification, including its model-defined loss.
The task, data, evaluator, and stage-specific training protocols remain fixed
across candidates. Appendix~\ref{appendix:model_contract} provides the model
interface.

\begin{figure*}[t]
    \centering
    \includegraphics[width=\linewidth]{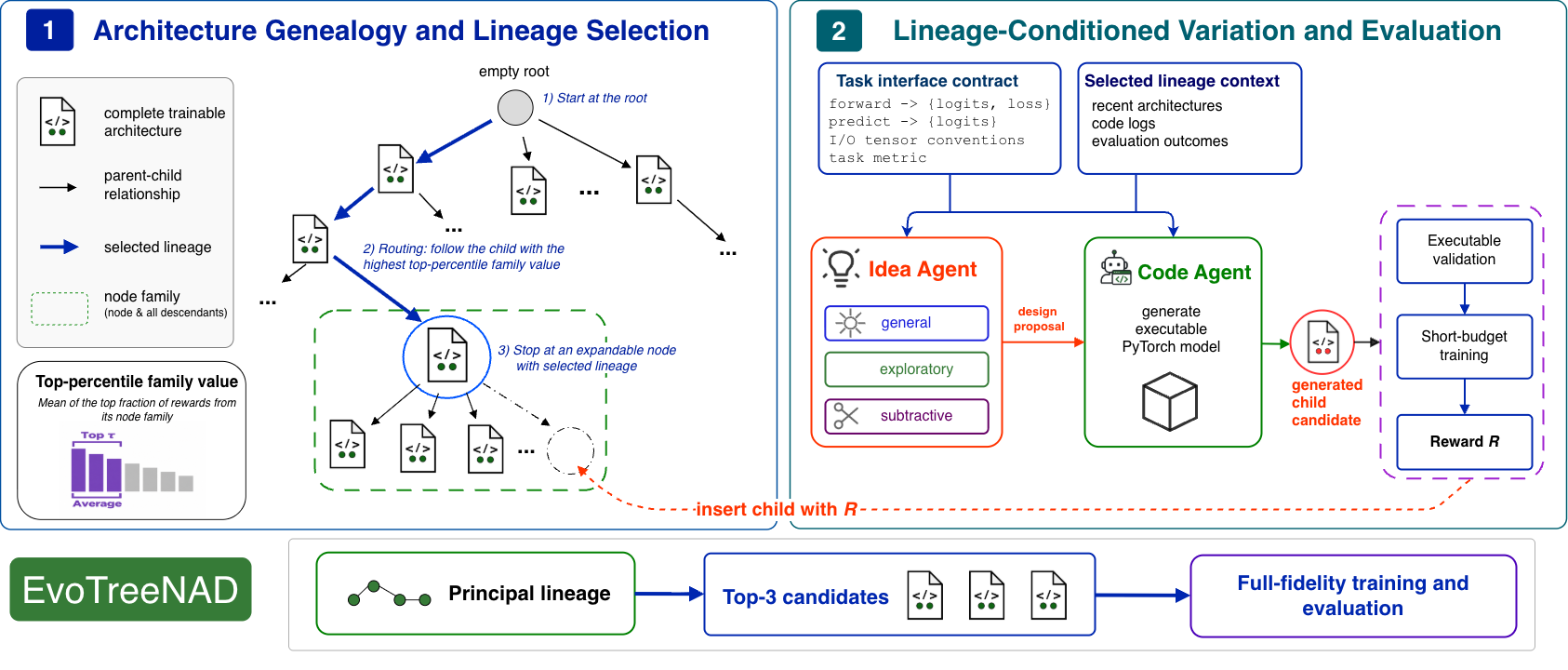}
    \caption{Overview of EvoTreeNAD.}
    \label{fig:evotree_overview}
\end{figure*}

\subsection{Architecture Genealogy and Family Values}
\label{sec:score_reward_value}
\label{sec:reward}
\label{sec:genealogical_formulation}

EvoTreeNAD begins with an empty root \(\rho\), which anchors routing but
does not represent an architecture. Each non-root node \(s\) represents an
architecture \(a_s\), and each edge records a parent-to-child design change.
After \(t\) evolution rounds, the genealogy is
\(\mathcal G_t=(\{\rho\}\cup\mathcal V_t,\mathcal E_t)\).

\paragraph{Node rewards.}
The discovery score \(S(s)\) measures the performance of architecture \(a_s\)
under a fixed short-budget training and validation protocol.
We convert this score into a comparable node reward while preserving the
metric direction:
\begin{equation}
R(s)=r(S(s))+\lambda_{\mathrm{size}}g(\mathrm{Params}(a_s)),
\qquad
r(v)=\sigma\!\left(\frac{\kappa d\,(v-v_e)}{|v_l-v_e|}\right),
\quad
d=\operatorname{sgn}(v_l-v_e).
\label{eq:reward_sigmoid_clean}
\end{equation}
Here \(\sigma\) is the logistic sigmoid, \(\kappa>0\) controls its slope,
\(v_e\) and \(v_l\) are fixed task-level reference values, and
\(g\) maps parameter count to a normalized compactness score in \([0,1]\).
The coefficient
\(\lambda_{\mathrm{size}}\) controls the compactness contribution. The node reward
\(R(s)\) evaluates one architecture; the family value defined below
aggregates node rewards over a descendant family to guide routing.

\paragraph{Top-percentile family value.}
\label{sec:value}
For a non-root node \(s\), its \emph{descendant family} contains \(s\) and
every architecture already evolved below it:
\begin{equation}
\mathcal D_t(s)=\{u\in\mathcal V_t:s\preceq_t u\},
\qquad
\mathcal H_t(s)=\biguplus_{u\in\mathcal D_t(s)}\{R(u)\}.
\label{eq:descendant_family_record}
\end{equation}
Here \(s\preceq_t u\) means that \(s\) is an ancestor of \(u\) in
\(\mathcal G_t\), including \(s=u\).

In our genealogy-based evolutionary framework, each non-root node is assigned
a value for routing, termed its \emph{family value}.
A family-value rule \(\Phi\) maps the rewards in \(\mathcal H_t(s)\)
to the value of node \(s\), \(V_t^\Phi(s)=\Phi(\mathcal H_t(s))\).
EvoTreeNAD uses a top-percentile rule to emphasize evidence from high-reward
family members while reducing sensitivity to weak variants.
Fix \(0<\tau\le1\) for the top-percentile rule \(\Phi_{\mathrm{top}}\).
Let \(n_t(s)=|\mathcal D_t(s)|\),
\(k_t(s)=\max\{1,\lceil\tau n_t(s)\rceil\}\), and let
\(x^{(s)}_{[1]}\ge\cdots\ge x^{(s)}_{[n_t(s)]}\) be the rewards in
\(\mathcal H_t(s)\). The top-percentile family value of node \(s\) is
\begin{equation}
V_t^{\mathrm{top}}(s)
=
\frac{1}{k_t(s)}
\sum_{i=1}^{k_t(s)}x^{(s)}_{[i]}.
\label{eq:value_family_top_tau}
\end{equation}

\subsection{Evolutionary Iteration}
\label{sec:evolution_iteration}

Each evolutionary iteration begins with lineage selection based on
top-percentile family values computed from the current genealogy.
Algorithm~\ref{alg:EvoTreeNAD} summarizes the complete evolutionary process.

\paragraph{Lineage selection.}
\label{sec:selection}
Each node \(s\) has fixed child capacity \(W(s)\); \(\operatorname{Ch}_t(s)\)
denotes its children in \(\mathcal G_t\). Routing starts at \(\rho\).
If the current node has remaining capacity, traversal stops. If it is
saturated, traversal follows its highest-valued child:
\begin{equation}
s_t^{(j+1)}
\in
\arg\max_{u\in\operatorname{Ch}_t(s_t^{(j)})}
V_t^{\mathrm{top}}(u),
\quad
\text{while }
|\operatorname{Ch}_t(s_t^{(j)})|\ge W(s_t^{(j)}).
\label{eq:allocation_traversal}
\end{equation}
The resulting root-to-node path
\(L_t=(\rho,s_t^{(1)},\ldots,s_t^{(m)})\) is the \emph{selected lineage}.
Its endpoint is the first node with remaining capacity and becomes the parent
of the next architecture. Because every iteration
begins again at the root and uses the current genealogy, later descendant
outcomes can redirect evolution toward a retained branch.

\paragraph{Architectural variation with Idea and Code Agents.}
\label{sec:variation}

The selected lineage provides the design and evaluation history for two agents
with distinct roles.
Agent prompts and context settings are provided in
Appendices~\ref{appendix:prompts} and~\ref{appendix:evotree_hparams}.
\begingroup
\setlength{\leftmargini}{1.5em}
\setlength{\labelsep}{0.4em}
\setlength{\labelwidth}{1.1em}
\begin{enumerate}
\setlength{\itemsep}{2pt}
\setlength{\parskip}{0pt}
\setlength{\parsep}{0pt}
\setlength{\topsep}{2pt}
\item \textbf{Idea Agent.}
The Idea Agent uses the task requirements and recent architectures and evaluation
outcomes from the selected lineage to propose distinct architectural designs or
modifications. These span \emph{general refinements}
of existing structures, \emph{exploratory modifications} that investigate
alternative designs, and \emph{subtractive simplifications} of low-contribution
components.

\item \textbf{Code Agent.}
The Code Agent implements a selected design proposal as a complete architecture
for evaluation, using the task interface and design history from the same
lineage.
\end{enumerate}
\endgroup

\noindent
\begin{minipage}[t]{0.485\textwidth}
\vspace{0pt}
\hrule height 0.8pt
\kern2pt
\captionsetup{type=algorithm}
\captionof{algorithm}{EvoTreeNAD genealogy-guided evolution}
\label{alg:EvoTreeNAD}
\par\kern2pt
\hrule height 0.4pt
\kern2pt
\footnotesize
\begin{algorithmic}[1]
\REQUIRE Task interface \(\mathcal S\), evaluator, top-percentile fraction
\(\tau\), child capacities \(W\), iterations \(H\)
\STATE Initialize \(\mathcal G_0\) with empty root \(\rho\)
\FOR{\(t=0,\ldots,H-1\)}
    \STATE Compute \(V_t^{\mathrm{top}}\) from \(\mathcal G_t\)
    \STATE \(s\leftarrow\rho\)
    \WHILE{\(|\operatorname{Ch}_t(s)|\ge W(s)\)}
        \STATE \(s\leftarrow
        \arg\max_{u\in\operatorname{Ch}_t(s)}
        V_t^{\mathrm{top}}(u)\)
    \ENDWHILE
    \STATE \(L_t\leftarrow\operatorname{Path}(\rho,s)\)
    \STATE Retrieve unused proposals \(I_t\) for \(s\); if none remain,
    set \(I_t\leftarrow\textsc{IdeaAgent}(\mathcal S,L_t)\)
    \STATE Select an unused proposal \(i_t\in I_t\) for \(s\)
    \STATE \(A_{t+1}\leftarrow\textsc{CodeAgent}(\mathcal S,L_t,i_t)\)
    \STATE Evaluate \(A_{t+1}\); compute \(R_{t+1}\)
    \STATE Add \(A_{t+1}\) with \(R_{t+1}\) under \(s\) to form \(\mathcal G_{t+1}\)
\ENDFOR
\STATE Extract the principal lineage from \(\mathcal G_H\)
\end{algorithmic}
\par\kern2pt
\hrule height 0.4pt
\end{minipage}\hfill
\begin{minipage}[t]{0.485\textwidth}
\vspace{0pt}
\paragraph{Architecture evaluation.}
\label{sec:evaluation}
Generated implementations are checked for executability and compatibility with
the task interface before short-budget training and validation under the fixed
task-specific protocol. Poorly performing candidates may be stopped early, with
recorded validation results still used to compute discovery scores and node
rewards. Attempts that yield no valid score trigger bounded Code-Agent retries
within the selected lineage. Appendix~\ref{app:discovery_evaluation} provides
details of these checks, early stopping, and failure handling.

\paragraph{Genealogy growth.}
After evaluation, the new architecture \(a_u=A_{t+1}\) is added as a child
of the selected parent together with its reward \(R(u)=R_{t+1}\), forming
\(\mathcal G_{t+1}\). Both improvements and weaker variants remain available
for further evolution. The new reward becomes part of the descendant-family
records, providing additional evidence for subsequent lineage selection.
\end{minipage}

\subsection{Final Architecture Selection}
\label{sec:principal_lineage}
After discovery, EvoTreeNAD extracts the \emph{principal lineage} by applying
the same routing rule (Eq.~\eqref{eq:allocation_traversal}) to the final
genealogy \(\mathcal G_H\). The three architectures on this lineage with the
highest discovery rewards form a fixed full-fidelity candidate set. Genealogy
growth, principal-lineage extraction, and candidate-set formation use only the
discovery training and validation data. After this set is fixed, its
architectures are trained from scratch under the full-fidelity reporting
protocol and evaluated on the official test split. Discovery and full-fidelity
protocols are provided in Appendices~\ref{appendix:discovery_protocol}
and~\ref{app:full_training_details}.

\FloatBarrier

\section{Dynamics of Genealogy-Guided Evolution}
\label{sec:stationary_selection_variation}

In EvoTreeNAD, accumulated descendant outcomes guide lineage selection,
which determines the design history used for subsequent variation.
We establish the existence of stationary empirical variation regimes for this
adaptive genealogy-growth process. Under specified assumptions on lineage-conditioned variation,
we further show that sustained top-percentile family values quantify the
probability of generating high-reward architectures in these regimes.

We consider EvoTreeNAD and its mean and maximum variants. Let \(\Phi\)
denote the corresponding family-value rule and \(\Omega_t^\Phi\) the process
state after \(t\) genealogy expansions, including the genealogy, observed
rewards, and retained information used for subsequent variation.
Rewards lie in a bounded interval \([a,b]\). One step of the induced process is
\begin{equation}
\begin{aligned}
L_t^\Phi &= \operatorname{Route}_{\Phi}(\Omega_t^\Phi),\\
(A_{t+1}^\Phi,R_{t+1}^\Phi)
&\sim K(\cdot\mid C_t^\Phi),\\
\Omega_{t+1}^\Phi
&\sim\operatorname{Grow}(\cdot\mid\Omega_t^\Phi,L_t^\Phi,
A_{t+1}^\Phi,R_{t+1}^\Phi),
\end{aligned}
\label{eq:main_genealogy_process}
\end{equation}
where \(C_t^\Phi\) is the variation context constructed from the process
state \(\Omega_t^\Phi\) and selected lineage \(L_t^\Phi\).
The kernel \(K\) specifies the conditional joint law of the next architecture and its reward, and
\(\operatorname{Grow}\) inserts the child and updates retained variation information.
Context construction and the variation and state-update kernels are shared
across rules, while the selected contexts and subsequent genealogies are
policy-dependent.

\paragraph{Stationary selection-variation.}
A conditional reward law \(F\) describes the possible rewards of variation
from a selected lineage context. We characterize ongoing variation through
the joint sequence of lineage-conditioned reward laws and realized rewards.
Empirical occupation measures over time shifts of this sequence admit
shift-stationary subsequential limits. We call these limits reachable
stationary regimes of the run.
\par\smallskip
\begin{theorem}[Stationary selection-variation]
\label{thm:main_stationary_selection_variation}
Every infinite run of Eq.~\eqref{eq:main_genealogy_process} admits reachable
stationary regimes. Almost surely, the following
conclusions hold simultaneously for every such regime \(\mathsf Q\). Let \(\nu_{\mathsf Q}\)
be the occupation distribution over lineage-conditioned reward laws and
\(\Lambda_{\mathsf Q}\) the reward distribution of generated architectures. Then
\begin{equation}
\Lambda_{\mathsf Q}=\int F\,\nu_{\mathsf Q}(dF).
\label{eq:main_stationary_mixture}
\end{equation}
Along every subsequence realizing \(\mathsf Q\), the empirical exceedance frequency at
each threshold \(q\) satisfying \(\Lambda_{\mathsf Q}(\{q\})=0\) converges to the
corresponding occupation-weighted conditional probability.
\end{theorem}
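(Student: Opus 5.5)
The plan is to recast the run as a sequence in a compact metric space and read off both conclusions from weak convergence of empirical occupation measures along subsequences. Let \(\mathcal P([a,b])\) be the Borel probability measures on \([a,b]\) with the weak topology; it is compact and metrizable. For each \(t\), let \(F_t=K_R(\cdot\mid C_t^\Phi)\in\mathcal P([a,b])\) be the reward marginal of the variation kernel at the selected context, and let \(Z_t=(F_t,R_{t+1}^\Phi)\in\mathcal X:=\mathcal P([a,b])\times[a,b]\). The sequence space \(\mathcal X^{\mathbb N}\) is compact metrizable under the product topology. Define the empirical occupation measures of time shifts,
\begin{equation}
\mathsf Q_N=\frac1N\sum_{t=0}^{N-1}\delta_{(Z_t,Z_{t+1},\ldots)}\in\mathcal P(\mathcal X^{\mathbb N}).
\end{equation}
By Prokhorov (compactness of \(\mathcal P(\mathcal X^{\mathbb N})\)), every run has subsequential limits \(\mathsf Q\). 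Each limit is shift-invariant, because \(\theta_*\mathsf Q_N-\mathsf Q_N=\frac1N(\delta_{\theta^N z}-\delta_z)\) has total variation at most \(2/N\). This gives existence of reachable stationary regimes pathwise, with no probabilistic input.

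Next I would identify \(\nu_{\mathsf Q}\) and \(\Lambda_{\mathsf Q}\) as the first-coordinate marginals of \(\mathsf Q\) on \(F\) and on \(R\). The key probabilistic input is a martingale strong law. Let \(\mathcal F_t\) be generated by \(\Omega_t^\Phi\), so that \(F_t\) is \(\mathcal F_t\)-measurable and \(\mathbb E[h(R_{t+1})\mid\mathcal F_t]=\int h\,dF_t\) for bounded measurable \(h\). Then \(M_N(h)=\sum_{t<N}\bigl(h(R_{t+1})-\int h\,dF_t\bigr)\) is a martingale with bounded increments, and Azuma plus Borel--Cantelli gives \(M_N(h)/N\to0\) almost surely.

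To make this hold simultaneously for all regimes and all needed test functions, I would take a countable convergence-determining class \(\mathcal C\). It consists of continuous \(h\) on \([a,b]\) (a dense countable subset of \(C[a,b]\)) together with indicators \(\mathbf 1_{(q,b]}\) for rational \(q\). I then intersect the countably many full-measure events. On this event, for every limit \(\mathsf Q\) along \(N_k\), continuity of \(F\mapsto\int h\,dF\) for continuous \(h\) gives
\begin{equation}
\int h\,d\Lambda_{\mathsf Q}=\lim_k\frac1{N_k}\sum_{t<N_k}h(R_{t+1})=\lim_k\frac1{N_k}\sum_{t<N_k}\int h\,dF_t=\int\!\!\int h\,dF\,\nu_{\mathsf Q}(dF).
\end{equation}
The index shift between \(R_{t+1}\) and \(R_t\) costs only \(O(1/N)\). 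This identifies \(\Lambda_{\mathsf Q}=\int F\,\nu_{\mathsf Q}(dF)\) as measures.

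For the exceedance statement at a threshold \(q\) with \(\Lambda_{\mathsf Q}(\{q\})=0\), the empirical frequency \(\frac1{N_k}\#\{t<N_k:R_{t+1}>q\}\) converges to \(\Lambda_{\mathsf Q}((q,b])\) by the portmanteau theorem, since \(q\) is a continuity point. By the mixture identity this equals \(\int F((q,b])\,\nu_{\mathsf Q}(dF)\), the occupation-weighted conditional probability.

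The main obstacle is this last step. The map \(F\mapsto F((q,b])\) is not weakly continuous, so I cannot pass it directly through \(\nu_{\mathsf Q}\)-limits of \(\frac1N\sum F_t((q,b])\). I would handle it with the mixture identity instead: \(\Lambda_{\mathsf Q}(\{q\})=0\) forces \(F(\{q\})=0\) for \(\nu_{\mathsf Q}\)-a.e. \(F\). Hence the discontinuity set of \(F\mapsto F((q,b])\) is \(\nu_{\mathsf Q}\)-null, and the continuous mapping theorem applies to the \(F\)-marginals of \(\mathsf Q_{N_k}\). This yields convergence of the conditional averages as well. Because \(q\) ranges over uncountably many reals, I would obtain uniformity by sandwiching \(\mathbf 1_{(q,b]}\) between continuous functions from \(\mathcal C\), which sidesteps any dependence of the null set on \(q\).
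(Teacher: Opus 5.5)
Your proposal is correct and follows essentially the same route as the paper: compactness of the sequence space plus the telescoping shift identity give existence and stationarity, a martingale strong law on a countable dense set of continuous test functions gives $\Lambda_{\mathsf Q}=\int F\,\nu_{\mathsf Q}(dF)$ on one probability-one event, and the portmanteau theorem together with the mixture identity gives the exceedance limit. Your final paragraph (continuous mapping with a $\nu_{\mathsf Q}$-null discontinuity set, followed by the sandwiching) is not needed for the stated conclusion; the continuous-mapping argument is nonetheless valid and yields subsequential convergence of the averaged conditional probabilities at all continuity thresholds simultaneously, whereas the paper's appendix instead records, for each fixed $q$, an almost-sure martingale strong law along the full sequence.
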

We next connect top-percentile family values to high-reward production within
these regimes and characterize how mean and maximum rules differ.

\paragraph{Family values and recurrent development.}
The variation assumptions in Appendix~\ref{app:genealogy_flow_production}
keep the high- and weak-reward distributions and intermediate reward \(r_0\)
fixed, while their mixture weights vary with the selected lineage context.
The highest-reward \(\tau\) fraction of each conditional reward law contains
all high outcomes and excludes weak ones. Let \(\mu_+\) denote the mean high reward.
The mean \(U_\tau(F)\) of the highest-reward \(\tau\) fraction of \(F\) is
the population counterpart of the empirical top-percentile family value.

Let \(q\) separate high rewards from intermediate and weak outcomes.
A family is \emph{recurrent} if selected lineages pass through its node
infinitely often.
For each recurrent family \(v\), we assume that the conditional high- and
weak-outcome probabilities have convergent averages along its sequence of
selected contexts. Let \(\rho_{\Phi,v}(q)\) denote the limiting average
conditional probability of reward at least \(q\). At the root, this average
covers the complete run.
\par\smallskip
\begin{theorem}[Top-percentile routing and recurrent high-reward production]
\label{thm:main_genealogy_flow_production}
Under the lineage-conditioned variation assumptions in
Appendix~\ref{app:genealogy_flow_production}, the following comparisons hold
almost surely in the genealogy generated by each rule.
\par\noindent
\textbf{Top-percentile.} For every fixed recurrent non-root family \(v\)
and every reachable stationary regime \(\mathsf Q\) of the same run,
\begin{equation}
\lim_{t\to\infty}V_t^{\mathrm{top}}(v)
=
U_\tau(\Lambda_{\mathsf Q})
=
r_0+\frac{\mu_+-r_0}{\tau}\Lambda_{\mathsf Q}([q,b]).
\label{eq:main_top_stationary_value}
\end{equation}
The limiting family value therefore characterizes the regime's top-percentile
reward mean and high-reward production probability. When \(\Phi\) is the
top-percentile rule, the equality \(\rho_{\Phi,v}(q)=\Lambda_{\mathsf Q}([q,b])\)
holds for every recurrent family \(v\) in the same run, connecting the high-reward
production rate of each recurrent family to that of the run.
\par\noindent
\textbf{Full-family mean.} Weak-outcome frequency reduces the limiting
family value. Recurrent child families attain equal limiting means but can
have different high-reward frequencies. The gap between the maximal high-reward
frequency among recurrent children and the parent frequency is proportional
to the corresponding gap in weak-outcome frequency
(Eq.~\eqref{eq:app_mean_production_gap}).
\par\noindent
\textbf{Family maximum.} The resulting genealogy growth is pathwise equivalent
to best-of-\(N\) greedy continuation. The current parent \(v\) produces
\(N=W(v)\) children, the child with the highest immediate reward becomes the
next parent, and unchosen siblings receive no further development.
\par\noindent
For each rule,
\(\rho_{\Phi,\rho}(q)\) equals the realized long-run frequency of generated
architectures with reward at least \(q\).
\end{theorem}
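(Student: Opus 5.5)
The plan is to treat the three rules separately, reducing each to a law-of-large-numbers statement along the relevant subsequence of selected contexts. The common tool is a martingale strong law. For a fixed threshold \(q\), write \(\xi_t=\mathbf 1\{R_{t+1}\ge q\}-F_t([q,b])\), where \(F_t=K(\cdot\mid C_t)\) is the conditional reward law. Then \(\xi_t\) is a bounded martingale difference with respect to the natural filtration of \(\Omega_t\).

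For any predictable selection indicator, such as \(\mathbf 1\{v\in L_t\}\), the Azuma/Freedman strong law gives the same conclusion on the event that the selection occurs infinitely often. Namely, the empirical high-reward frequency and the empirical weak-outcome frequency along the selected times differ from the averages of the conditional probabilities by \(o(1)\) almost surely. Countably many thresholds and families are handled by a union over a countable set, since the genealogy is countable.

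Taking the selection to be all times gives the final claim at the root: \(\rho_{\Phi,\rho}(q)\) equals the realized long-run frequency. Theorem~\ref{thm:main_stationary_selection_variation} identifies these averages along a subsequence realizing \(\mathsf Q\) with \(\Lambda_{\mathsf Q}([q,b])\).

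\textbf{Top-percentile.} Under the appendix assumptions every conditional law has the form \(F=p_+F_++p_0\delta_{r_0}+p_-F_-\). Its top \(\tau\) fraction contains all high mass and no weak mass, so it is filled by the high part plus intermediate mass at \(r_0\). This gives \(U_\tau(F)=r_0+(\mu_+-r_0)p_+/\tau\), which is linear in \(p_+\). Since \(U_\tau\) is affine in \(p_+=F([q,b])\), it commutes with the mixture \eqref{eq:main_stationary_mixture}, and the second equality in \eqref{eq:main_top_stationary_value} follows.

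For the first equality, I would show that the empirical top-\(\lceil\tau n\rceil\) mean of \(\mathcal H_t(v)\) converges to \(r_0+(\mu_+-r_0)\hat p/\tau\), where \(\hat p\) is the empirical high frequency in the family. This needs three ingredients: the high frequency stays below \(\tau\), intermediate outcomes fill the remainder of the top block, and the strong law applied to the high rewards themselves gives a sample mean tending to \(\mu_+\).

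The main obstacle is to show that \(\hat p\) in a recurrent family \(v\) matches \(\Lambda_{\mathsf Q}([q,b])\) for \emph{every} regime \(\mathsf Q\), i.e.\ that the limit exists and is regime-independent. My approach is a self-consistency argument via the routing rule. If the family values of recurrent siblings had different limits, the argmax rule would eventually stop selecting the lower one, contradicting recurrence. So all recurrent families along the eventual routing paths share one limiting value, and hence one limiting high frequency \(\rho_{\Phi,v}(q)\). The root family contains all of them plus finitely many transient contributions of vanishing density, so its frequency equals this common value. The assumed convergence of the average conditional probabilities then forces the root average to exist, which makes \(\Lambda_{\mathsf Q}([q,b])\) the same for every \(\mathsf Q\).

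\textbf{Full-family mean.} The same strong law gives \(V_t^{\mathrm{mean}}(v)\to r_0+(\mu_+-r_0)p_+^v-(r_0-\mu_-)p_-^v\), so weak frequency lowers the limit. The argmax argument again equalizes the limits among recurrent children. Equating the parent and child limits and solving yields \(p_+^{c}-p_+^{v}=\frac{r_0-\mu_-}{\mu_+-r_0}(p_-^{c}-p_-^{v})\), which is the stated proportional gap. Unequal high frequencies are possible because the mean is a single linear functional of the two frequencies \((p_+,p_-)\).

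\textbf{Family maximum.} I would argue by induction on \(t\). Once a node is saturated, the maximum over a child's family is at least that child's reward. Any later descendant improving a family maximum lies in the subtree already being routed into, so the child with the highest immediate reward keeps the highest family maximum among its siblings (with ties broken consistently). Routing therefore descends only through argmax-reward children. Each parent receives exactly \(W(v)\) children before routing moves on, and siblings are never revisited, which is pathwise best-of-\(N\) with \(N=W(v)\).
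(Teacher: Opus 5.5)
Your proposal is correct and is essentially the paper's proof. It uses a martingale strong law along routed iterations (Lemma~\ref{lem:app_routed_call_law}), the argmax contradiction forcing recurrent siblings to share a limiting family value, the split of a parent's routed iterations among its children, the root case for the realized frequency, and the induction showing that maximum routing is greedy. The one real difference is that you get the second equality and the value of \(\Lambda_{\mathsf Q}([q,b])\) from Theorem~\ref{thm:main_stationary_selection_variation}. The paper instead shows directly that the full-sequence empirical reward measure converges weakly to \(F_{\rho_{\Phi,\rho}(q),\overline\ell_{\Phi,\rho}}\), which avoids the continuity-set condition \(\Lambda_{\mathsf Q}(\{q\})=0\).

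Two steps need tightening. First, sibling equality alone does not give parent--child equality. You need \(\rho_{\Phi,v}(q)=\sum_{c}w_T(c)\rho_{\Phi,c}(q)+o(1)\), with \(w_T(c)=N_T^\Phi(c)/N_T^\Phi(v)\), at every recurrent \(v\) rather than only at the root, inducted down the root-to-\(v\) path. The same identity is what justifies ``equating the parent and child limits'' in the mean case. Second, do the top-block computation through a continuity bound, as in the paper's \(W_1\) estimate, rather than assuming the empirical high frequency stays below \(\tau\), since \(\rho_{\Phi,v}(q)\) may equal \(\tau\).
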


Maintaining a high top-percentile family value as a family grows requires a
nonvanishing fraction of high-reward descendants, rather than a few early
successes (Proposition~\ref{prop:top_fraction_evidence}).
Under mean routing, a recurrent child family can produce high-reward
architectures more frequently than a recurrent sibling yet receive no higher
limiting value because it also produces weak variants more frequently.
The limiting top-percentile family value instead tracks high-reward production
without this weak-outcome penalty. Unlike the maximum-family variant,
EvoTreeNAD can develop successful architectures without making earlier branch
choices permanent, allowing subsequent outcomes to redirect evolution toward
retained alternatives.
Full statements and proofs appear in Appendix~\ref{appendix:routing_theory}.


\section{Experiments}
\label{sec:experiments}

We evaluate EvoTreeNAD on CIFAR-10/100 and six MedMNIST-v2 tasks spanning
diverse medical imaging modalities and both 2D image and 3D volume
classification. We report the performance of the discovered architectures and
use controlled CIFAR-10 comparisons to assess the benefit of genealogy-guided
evolution.

\subsection{Protocol}

CIFAR-10/100~\citep{krizhevsky2009learning} provide established architecture
discovery benchmarks. We split each 50k training set into 42k/8k
discovery-training and validation subsets. Final performance is evaluated on
the official 10k test set. MedMNIST-v2~\citep{medmnistv1,medmnistv2} uses its official
train, validation, and test splits
(dataset details in Appendix~\ref{appendix:medmnist_description_statistics}).
EvoTreeNAD begins from an empty root on every task.
Main discovery uses three independent runs per task, with GPT-4.1 for architectural
proposals and gpt-oss-20b (OSS20B) for code realization.
We mask dataset identifiers in agent prompts to reduce LLM reliance on
benchmark-specific prior knowledge. CIFAR-10/100 architecture
discovery uses budgets of up to 300 iterations, with the
corresponding discovery costs reported in Table~\ref{tab:performance_cifar}.
CIFAR-10 discovery-strategy and agent-configuration comparisons use 100
iterations per run. MedMNIST-v2 discovery runs use 80 iterations each.

Candidates receive short-budget evaluation during evolution. After each
EvoTreeNAD run, the three architectures with the highest discovery rewards on
its principal lineage form a fixed candidate set for full-fidelity evaluation.
Genealogy growth and candidate-set formation use only
discovery training and validation data. Each candidate is then trained from
scratch under the fixed task-specific full-fidelity recipe, with no pretraining,
external data, or architecture-specific tuning after selection. Reported task
metrics use the official test splits. Discovery budgets, hyperparameters, and
training recipes appear in Appendices~\ref{appendix:discovery_protocol},
\ref{appendix:evotree_hparams}, and~\ref{app:full_training_details}.
Code will be released on GitHub.

\subsection{Architecture Discovery on CIFAR-10/100}
\begin{table*}[htbp]
  \caption{CIFAR-10/100 test performance. For each dataset, we report the best-performing discovered architecture and a smaller competitive architecture from a different discovery run. EvoTreeNAD architectures are trained from scratch under fixed full-fidelity recipes; mean$\pm$std is computed over four random seeds. Results for other methods are taken from their publications.}
  \label{tab:performance_cifar}
  \begin{center}
    \scriptsize
    \setlength{\tabcolsep}{2.8pt}
        \begin{tabular}{@{}p{4.4cm}ccc@{\hspace{5pt}}ccc@{\hspace{5pt}}cc@{}}
          \toprule
          \multirow{2}{*}{Approach}
            & \multicolumn{3}{c}{CIFAR-10}
            & \multicolumn{3}{c}{CIFAR-100}
            & \multirow{2}{*}{Method}
            & \multirow{2}{*}{Space} \\
            \cmidrule(lr){2-4} \cmidrule(lr){5-7}
            & Top-1 err. (\%)$\downarrow$ & Params (M) & GD
            & Top-1 err. (\%)$\downarrow$ & Params (M) & GD
            & & \\
          \midrule
          \multicolumn{9}{@{}l}{\textit{Classical NAS and differentiable NAS}} \\
          NASNet-A \citep{zoph2018learningtransferablearchitecturesscalable} & 2.65 & 3.3 & 2000 & N/R & N/R & N/R & RL & NASNet \\
          ENAS \citep{pham2018efficientneuralarchitecturesearch} & 2.89 & 4.6 & 0.45 & N/R & N/R & N/R & RL & NASNet \\
          AmoebaNet-B \citep{real2019regularized} & 2.55\std{0.05} & 2.8 & 3150 & N/R & N/R & N/R & EA & NASNet \\
          Random Search \citep{li2019randomsearchreproducibilityneural} & 2.85\std{0.08} & 4.3 & 9.7 & N/R & N/R & N/R & Random & DARTS \\
          DARTS \citep{liu2018darts} & 2.76\std{0.09} & 3.3 & 4 & N/R & N/R & N/R & GB & DARTS \\
          P-DARTS \citep{chen2021progressive} & 2.50 & 3.4 & 0.3 & 15.92 & 3.6 & 0.3 & GB & DARTS \\
          $\beta$-DARTS \citep{ye2022b} & 2.51\std{0.07} & 3.78 & 0.4 & 16.24\std{0.22} & 3.8 & 0.4 & GB & DARTS \\
          $\Lambda$-DARTS \citep{movahedi2022lambda} & 2.35 & 3.8 & 0.8 & 15.8 & 3.8 & 0.8 & GB & DARTS \\
          EG-NAS \citep{Cai_Chen_Liu_Ling_Lai_2024} & 2.53 & 3.2 & 0.1 & 16.22 & 3.2 & 0.1 & GB & DARTS \\
          ProxylessNAS-G \citep{cai2019proxylessnasdirectneuralarchitecture} & 2.08 & 5.7 & N/R & N/R & N/R & N/R & GB & DARTS \\
          \midrule
          \multicolumn{9}{@{}l}{\textit{LLM-guided NAS and open-ended NAD}} \\
          EvoPrompting \citep{chen2023evoprompting} & 6.89\std{0.90} & N/R & N/R & 29.61\std{0.58} & N/R & N/R & LLM & NATS \\
          GENIUS \citep{zheng2023can}$^{\ddagger}$ & 6.21\std{0.09} & N/R & N/R & 29.09\std{0.72} & N/R & N/R & LLM & NB201 \\
          LM-Searcher \citep{hu2025lm} & 3.10 & N/R & N/R & 27.04 & N/R & N/R & LLM & DARTS \\
          RZ-NAS \citep{ji2025rznas} & 2.41\std{0.13} & N/R & 0.03 & 17.49\std{0.08} & N/R & 0.03 & LLM & DARTS \\
          RevoNAD \citep{chang2025revonad}$^{\ddagger}$ & 4.78\std{0.2} & N/R & N/R & 23.62\std{0.32} & N/R & N/R & LLM & None \\
          NADER \citep{yang2025nader}$^{\ddagger}$ & 5.38 & N/R & N/R & 24.00 & N/R & N/R & LLM & None \\
          \midrule
          \multicolumn{9}{@{}l}{\textbf{EvoTreeNAD (ours) with no predefined architecture space or supplied seed architecture}} \\
          \rowcolor{OursBlue}
          Best discovered architecture & \textbf{2.05\std{0.06}} & 7.03 & 0.63 & \textbf{15.09\std{0.22}} & 7.67 & 1.02 & \textbf{LLM} & \textbf{None} \\
          \rowcolor{OursBlue}
          Smaller discovered architecture & 2.38\std{0.06} & \textbf{2.25} & \textbf{0.38} & 15.18\std{0.10} & 5.66 & 1.14 & \textbf{LLM} & \textbf{None} \\
          \rowcolor{OursBlue!55}
          \textit{Best training-seed result} & 2.00 & N/R & N/R & 14.82 & N/R & N/R & LLM & None \\
          \bottomrule
        \end{tabular}
    \par\smallskip
    {\raggedright
    GD denotes reported discovery GPU-days. For EvoTreeNAD, GD reports the
    discovery cost of the corresponding run. Space denotes the reported
    hand-specified search space, if any. N/R indicates not reported.
    $^{\ddagger}$ marks the official NAS-Bench-201 protocol.\par}
  \end{center}
  \vskip -0.1in
\end{table*}

EvoTreeNAD discovers architectures that outperform the listed NAS and NAD
methods, achieving test errors of \(2.05{\pm}0.06\%\) on CIFAR-10 and
\(15.09{\pm}0.22\%\) on CIFAR-100 (Table~\ref{tab:performance_cifar}).
On CIFAR-10, it also discovers a 2.25M-parameter architecture with
\(2.38{\pm}0.06\%\) error, improving on DARTS and P-DARTS with fewer parameters.
On CIFAR-100, both discovered architectures surpass the reported results of
all listed baselines. These results demonstrate that EvoTreeNAD can discover
multiple high-performing architectures that are competitive with established
NAS and NAD methods
(representative architectures in Appendix~\ref{appendix:representative_architectures}).

\subsection{Comparison of Discovery Strategies}

To assess the benefit of genealogy-guided evolution, we compare EvoTreeNAD
with repeated direct generation, best-of-\(N\) greedy continuation, and
full-family-mean routing on CIFAR-10
(strategy schematics in Appendix Figure~\ref{fig:ablation_schematic}).
Each method uses three independent
100-iteration runs with OSS20B for both agents. Repeated direct generation produces
independent architectures from the empty root. The maximum-family variant is
exactly best-of-\(N\) greedy continuation
(Proposition~\ref{prop:app_maximum_greedy}), generating siblings and continuing
only from the child with the highest immediate reward. The full-family mean variant replaces
the top-percentile family value of each node with the mean reward of its entire
family. Both variants change the family-value
rule within EvoTreeNAD's lineage-conditioned generation and evaluation pipeline.

\begin{table*}[t]
\centering
\caption{CIFAR-10 comparison of architecture-discovery strategies.
Each strategy has three independent 100-iteration runs. The Run 1--3 columns report the highest full-fidelity accuracy (\%) among three candidates selected by discovery reward within each run; Best and Mean summarize these three run-level accuracies.}
\label{tab:discovery_strategy_controls}
\scriptsize
\setlength{\tabcolsep}{3.4pt}
\resizebox{\textwidth}{!}{%
\begin{tabular}{lccc ccc cc}
\toprule
Strategy
& \shortstack{Lineage-conditioned\\variation}
& \shortstack{Branch\\reconsideration}
& Selection signal
& Run 1 & Run 2 & Run 3 & Best & Mean \\
\midrule
Repeated direct generation
& -- & -- & --
& 96.94 & 96.59 & 96.60 & 96.94 & 96.71 \\
Best-of-\(N\) greedy continuation
& \checkmark & -- & Immediate child reward
& 96.34 & 96.92 & 97.06 & 97.06 & 96.77 \\
Full-family mean
& \checkmark & \checkmark & Mean-based family value
& 96.81 & 95.66 & 96.49 & 96.81 & 96.32 \\
\rowcolor{OursBlue}
\textbf{EvoTreeNAD}
& \checkmark & \checkmark & \textbf{Top-percentile family value}
& \textbf{97.39} & \textbf{97.26} & \textbf{97.11}
& \textbf{97.39} & \textbf{97.25} \\
\bottomrule
\end{tabular}%
}
\end{table*}

EvoTreeNAD consistently outperforms direct generation, best-of-\(N\) greedy
continuation, and full-family-mean routing. Each of its three runs achieves
higher accuracy than the best run of any alternative
(Table~\ref{tab:discovery_strategy_controls}). The advantage over
full-family-mean routing shows the benefit of top-percentile selection over
averaging all family rewards within the same genealogy-guided framework.
The advantage over mean routing and greedy
continuation is consistent with the distinctions established in
Theorem~\ref{thm:main_genealogy_flow_production}, which characterizes the
sensitivity of full-family means to weak descendants and the irreversible
lineage choices of maximum-family routing.

\subsection{Architecture Evolution and Agent Configurations}

\begin{figure*}[t]
    \centering
    \captionsetup[subfigure]{justification=raggedright,singlelinecheck=false}
    \begin{subfigure}[c]{0.45\textwidth}
        \centering
        \includegraphics[width=\linewidth]{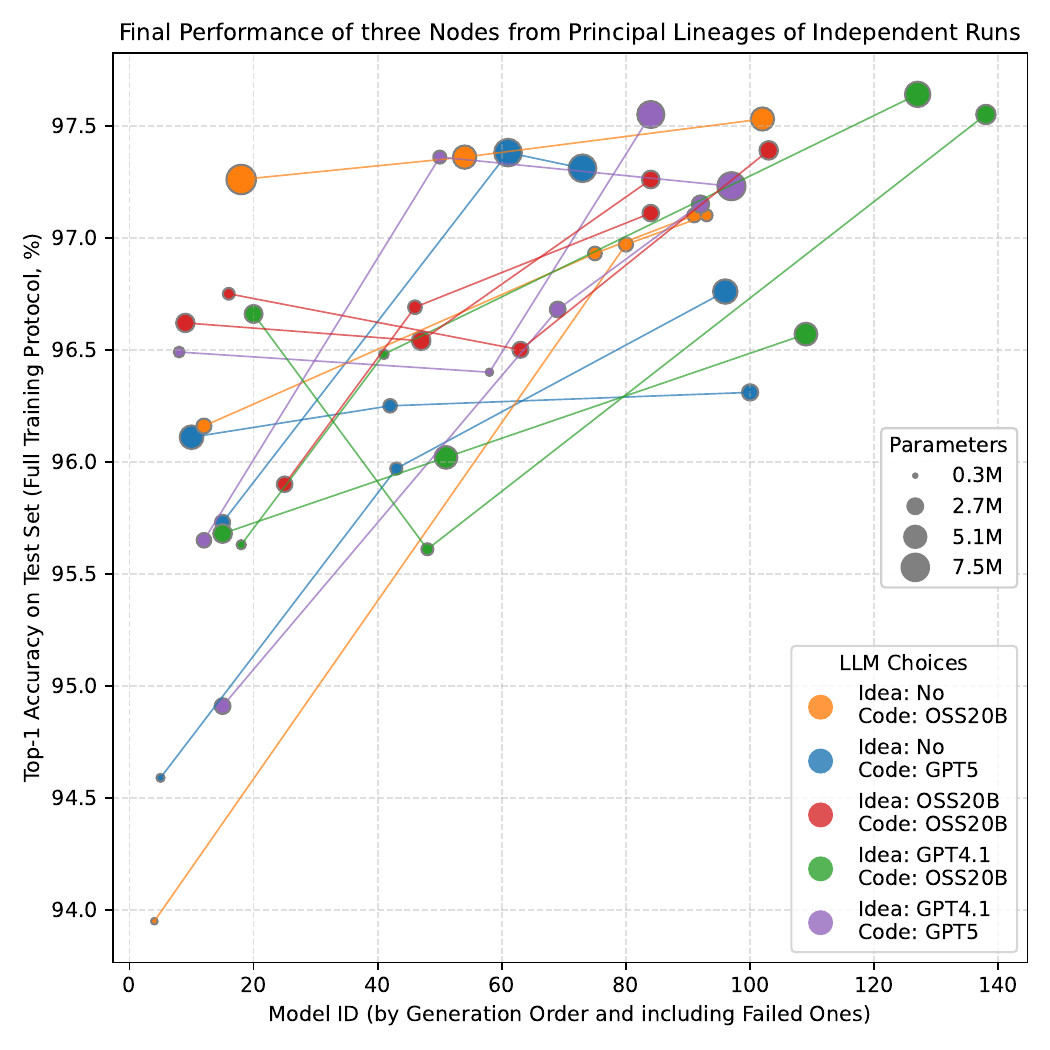}
        \caption{Full-fidelity test accuracy of three architectures spanning the early and later stages of each principal lineage from independent 100-iteration runs.}
        \label{fig:main_lineage_progression}
    \end{subfigure}
    \hfill
    \begin{minipage}[c]{0.43\textwidth}
        \begin{subfigure}[t]{\linewidth}
            \centering
            \includegraphics[width=\linewidth]{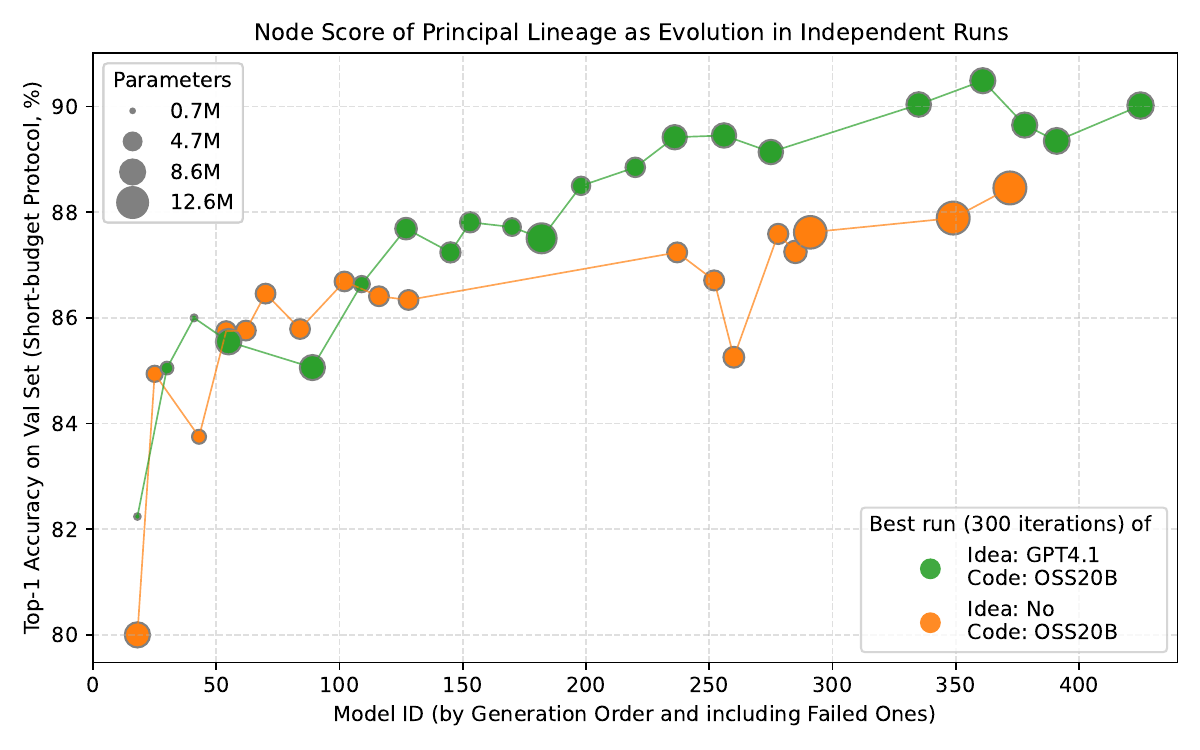}
            \caption{Short-budget validation accuracy along principal lineages from two 300-iteration runs.}
            \label{fig:main_extended_proxy}
        \end{subfigure}
        \vspace{1pt}
        \begin{subfigure}[t]{\linewidth}
            \centering
            \includegraphics[width=\linewidth]{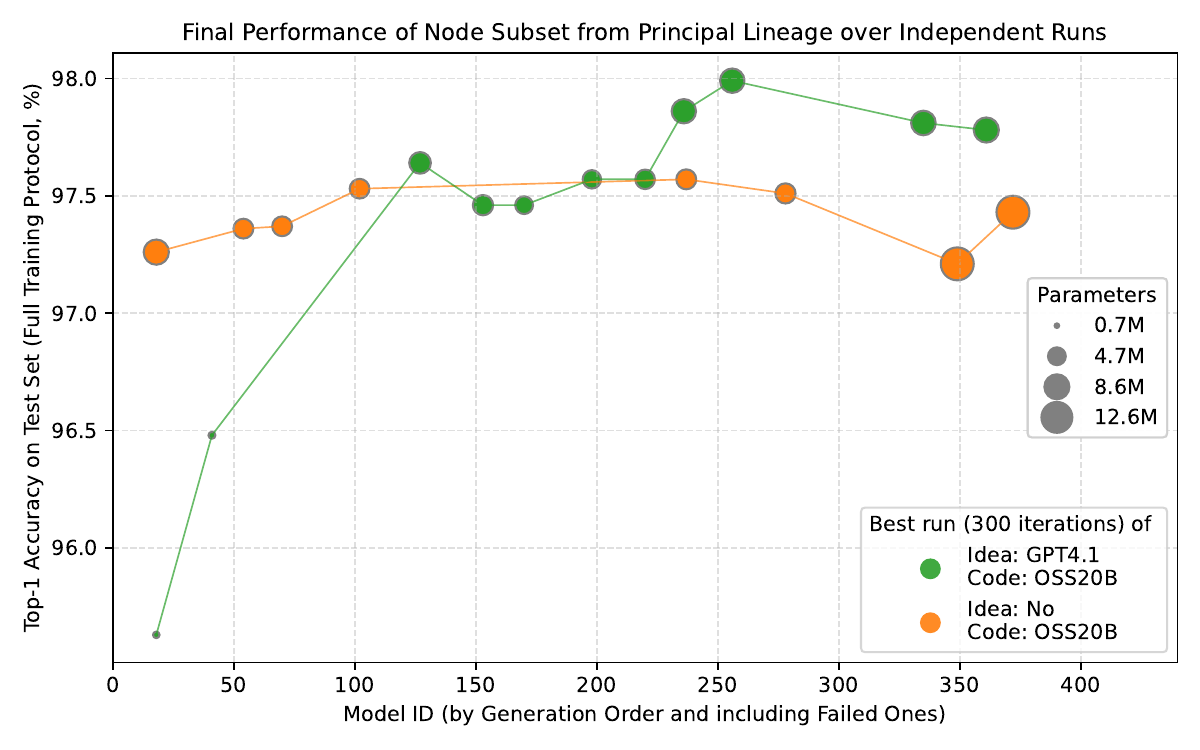}
            \caption{Full-fidelity test accuracy of architectures from the same principal lineages.}
            \label{fig:main_extended_full}
        \end{subfigure}
    \end{minipage}
    \caption{Architecture development along EvoTreeNAD principal lineages on CIFAR-10.}
    \label{fig:main_lineage_dynamics}
\end{figure*}

\begin{table}[t]
    \centering
    \setlength{\parskip}{0pt}
    \renewcommand{\std}[1]{{\color{StdBlueGray}\fontsize{4}{4.8}\selectfont\textpm#1}}
    \begin{minipage}[t]{0.44\textwidth}
        \vspace{0pt}
        \centering
        \captionsetup{font=footnotesize,aboveskip=2pt,belowskip=3pt}
        \caption{MedMNIST-v2 2D results.}
        \label{tab:medmnist_2d}
        \fontsize{6}{7.5}\selectfont
        \setlength{\tabcolsep}{1.5pt}
        \begin{tabular}{@{}lccc@{}}
            \toprule
            \textbf{Method} & \textbf{PathMNIST} & \textbf{OCTMNIST} & \textbf{TissueMNIST} \\
            \midrule
            ResNet-18 (28)\textsuperscript{\dag} & 0.907 & 0.743 & 0.676 \\
            ResNet-18 (224)\textsuperscript{\dag} & 0.909 & 0.763 & 0.681 \\
            ResNet-50 (28)\textsuperscript{\dag} & 0.911 & 0.762 & 0.680 \\
            ResNet-50 (224)\textsuperscript{\dag} & 0.892 & 0.776 & 0.680 \\
            AutoML (AutoKeras)\textsuperscript{\dag} & 0.834 & 0.763 & 0.703 \\
            Google AutoML Vision\textsuperscript{\dag} & 0.728 & 0.771 & 0.673 \\
            MSTF-NAS & 0.910 & 0.780 & 0.740 \\
            ZO-DARTS++ L & 0.804\std{0.024} & 0.799\std{0.010} & 0.656\std{0.012} \\
            EvoNASMed & 0.920 & 0.800 & 0.760 \\
            \midrule
            \rowcolor{OursBlue}
            \textbf{EvoTreeNAD (64)} & \textbf{0.962\std{0.003}} & \textbf{0.926\std{0.009}} & \textbf{0.769\std{0.001}} \\
            \bottomrule
        \end{tabular}
    \end{minipage}\hfill
    \begin{minipage}[t]{0.54\textwidth}
        \vspace{0pt}
        \centering
        \captionsetup{font=footnotesize,aboveskip=2pt,belowskip=3pt}
        \caption{MedMNIST-v2 3D results.}
        \label{tab:medmnist_3d}
        \fontsize{6}{7.5}\selectfont
        \setlength{\tabcolsep}{1.5pt}
        \begin{tabular}{@{}lccc@{}}
            \toprule
            \textbf{Method} & \textbf{Vessel3D} & \textbf{Synapse3D} & \textbf{Organ3D} \\
                \textit{Metric} & \textit{AUC, ACC} &  \textit{AUC, ACC} &  \textit{ACC} \\
            \midrule
            ResNet-18 + 3D\textsuperscript{\dag} & 0.874, 0.877 & 0.820, 0.745 & 0.907 \\
            ResNet-18 + ACS\textsuperscript{\dag} & 0.930, 0.928 & 0.705, 0.722 & 0.900 \\
            ResNet-50 + 3D\textsuperscript{\dag} & 0.907, 0.918 & 0.851, 0.795 & 0.883 \\
            ResNet-50 + ACS\textsuperscript{\dag} & 0.912, 0.858 & 0.719, 0.709 & 0.889 \\
            auto-sklearn\textsuperscript{\dag} & 0.910, 0.915 & 0.631, 0.730 & 0.814 \\
            AutoML (AutoKeras)\textsuperscript{\dag} & 0.773, 0.894 & 0.538, 0.724 & 0.804 \\
            EGNAS & 0.918\std{0.002}, 0.901\std{0.004} & 0.892\std{0.004}, 0.868\std{0.004} & 0.943\std{0.003} \\
            EvoNASMed & 0.940, 0.940 & 0.820, 0.846 &0.908 \\
            \midrule
            \rowcolor{OursBlue}
            \textbf{EvoTreeNAD ($64^3$)} & \textbf{0.975\std{0.008}, 0.970\std{0.002}} & \textbf{0.981\std{0.002}, 0.952\std{0.003}} & \textbf{0.956\std{0.004}} \\
            \bottomrule
        \end{tabular}
    \end{minipage}
    \par\smallskip
    {\fontsize{7}{8.5}\selectfont\raggedright
    The 2D tasks and Organ3D report accuracy; Vessel3D and Synapse3D report
    AUC, accuracy. EvoTreeNAD results report mean$\pm$std over four random seeds.
    \textsuperscript{\dag} marks official MedMNIST-v2 baselines
    from \cite{medmnistv2}. Other prior methods are MSTF-NAS~\citep{wang2024mednas},
    ZO-DARTS++~\citep{xie2025zo}, EvoNASMed~\citep{ali2024evolutionary},
    and EGNAS~\citep{benmeziane2025efficient}.\par}
\end{table}

\paragraph{Architecture outcomes along principal lineages.}
To examine how architectures develop along a lineage, we evaluate three
architectures spanning its early and later stages under the full-fidelity
protocol (Figure~\ref{fig:main_lineage_progression}). Later architectures often
outperform the initial designs despite intermediate performance declines.
For these CIFAR-10 architectures, short-budget validation scores correlate
positively with full-fidelity performance (Appendix Figure~\ref{fig:short_vs_full}),
supporting the use of short-budget scores to guide evolution.
The two 300-iteration runs provide a longer view of this
development, showing further gains in short-budget validation and full-fidelity
performance as evolution proceeds (Figures~\ref{fig:main_extended_proxy}
and~\ref{fig:main_extended_full}). In the run with GPT-4.1 proposals, later
evolution produces our best-performing CIFAR-10 architecture.
Architecture-selection protocols and details of the 300-iteration runs appear
in Appendices~\ref{appendix:evo_progress_ana} and~\ref{sec:extend_iter}.

\paragraph{Idea and Code Agent controls.}
We evaluate EvoTreeNAD under five Idea and Code Agent configurations,
including code-only variation. Each configuration uses three independent runs
with genealogy-guided selection held fixed. EvoTreeNAD discovers competitive
architectures across these configurations
(Figure~\ref{fig:main_lineage_progression}). The best-performing architectures
are discovered using GPT-4.1 as the Idea Agent and OSS20B as the Code Agent.
Code realization accounts for most of the agent token usage in these runs.
Locally deployed OSS20B handles this token-intensive role without API charges.
Appendix Figure~\ref{fig:appendix_agent_configuration} summarizes the
best architecture performance from each run, and Appendix
Table~\ref{tab:agent_comparison} reports process statistics and costs.

\subsection{Architecture Discovery across MedMNIST-v2 Tasks}

EvoTreeNAD discovers architectures that outperform the strongest listed
baseline on all six MedMNIST-v2 tasks
(Tables~\ref{tab:medmnist_2d} and~\ref{tab:medmnist_3d}). These gains cover
diverse medical imaging modalities and both 2D image and 3D volume
classification. Starting from an empty root on each task, the same
genealogy-guided framework produces high-performing architectures across these
settings, demonstrating its effectiveness beyond natural-image benchmarks.

\FloatBarrier

\section{Discussion and Conclusion}

EvoTreeNAD combines lineage-conditioned variation with selection guided by
top-percentile family values in a persistent architecture genealogy. Successful designs can support
further evolution while accumulated descendant evidence can redirect development
toward retained alternatives. Our two theorems give this process a quantitative
foundation by connecting continued genealogy growth, stationary variation
regimes, and high-reward production. Experiments demonstrate competitive
architecture performance on natural-image and medical-image classification
tasks, while controlled comparisons show the advantage of EvoTreeNAD over
direct generation, greedy continuation, and mean-based routing.

Scaling EvoTreeNAD depends on obtaining informative evaluation feedback at an
affordable computational cost. Overall, EvoTreeNAD demonstrates how LLM-driven
discovery can build on
prior designs while using accumulated experimental evidence to reconsider its
direction.

\subsection*{AI use statement}
The authors conceived and carried out this research and wrote the manuscript,
using LLM-based tools (ChatGPT and OpenAI Codex) for assistance.
The authors used LLMs to help improve the
literature review by identifying additional relevant references and refining
its organization. Within the theoretical framework developed by the authors,
LLMs assisted the authors in developing proofs, exploring proof strategies,
and refining and checking mathematical derivations. LLMs also assisted with drafting
and revising portions of the manuscript, including the theoretical section and
appendix. The authors reviewed all AI-assisted content, checked the mathematical
arguments and cited sources, and take responsibility for the entire work.

\bibliographystyle{evotreenad}
\bibliography{references}

\clearpage
\appendix

\section{Dynamics of Genealogy-Guided Evolution}
\label{appendix:routing_theory}

\setcounter{theorem}{0}
\renewcommand{\theHtheorem}{appendix.\arabic{theorem}}

This appendix provides the full statements and proofs of the theoretical results
in Section~\ref{sec:stationary_selection_variation}. We first formalize genealogy
growth in EvoTreeNAD and establish the properties of the family-value rules.
We then prove the existence of stationary empirical variation regimes and show
how sustained top-percentile family values characterize high-reward production,
alongside comparisons with mean and maximum routing.
Table~\ref{tab:theory_notation} lists the notation and corresponding descriptions.

\begin{table}[!ht]
\centering
\caption{Notation Table}
\label{tab:theory_notation}
\small
\setlength{\tabcolsep}{5pt}
\renewcommand{\arraystretch}{1.12}
\begin{tabular}{@{}p{0.30\linewidth}p{0.66\linewidth}@{}}
\toprule
Notation & Description \\
\midrule
\(\Phi\) &
Rule mapping a descendant-family reward multiset to a family value. \\
\(\Omega_t^\Phi,\ \mathcal G_t^\Phi\) &
Process state and its genealogy after \(t\) expansions under rule \(\Phi\). \\
\(\mathcal H_t^\Phi(v)\) &
Reward multiset of node \(v\) and all its descendants in the current genealogy. \\
\(V_t^\Phi(v)\) &
Empirical family value \(\Phi(\mathcal H_t^\Phi(v))\) used for routing. \\
\(\tau,\ V_t^{\mathrm{top}}(v)\) &
Fixed top-percentile fraction and corresponding empirical family value. \\
\(L_t^\Phi\) &
Selected root-to-node lineage; its endpoint receives the next child. \\
\(C_t^\Phi\) &
Variation context constructed from the process state and selected lineage. \\
\(K\) &
Lineage-conditioned kernel generating an architecture and its reward. \\
\(A_{t+1}^\Phi,\ R_{t+1}^\Phi\) &
Generated architecture and its realized reward at expansion \(t+1\) under rule \(\Phi\). \\
\(F_t^\Phi\) &
Conditional reward law of the next architecture, given the pre-iteration history. \\
\(\mathsf Q_{\Phi,T},\ \mathsf Q\) &
Empirical occupation measure over time shifts of the sequence of conditional reward laws and realized rewards, and a reachable stationary regime given by a shift-invariant weak subsequential limit. \\
\(\nu_{\mathsf Q}\) &
Marginal distribution of conditional reward laws under \(\mathsf Q\). \\
\(\Lambda_{\mathsf Q}\) &
Marginal distribution of generated rewards under \(\mathsf Q\). \\
\(M_{k,n}(x)\) &
Mean of the \(k\) largest entries of \(x\in[a,b]^n\). \\
\(U_\alpha(F)\) &
Mean of the highest-reward \(\alpha\) fraction of \(F\), for \(0<\alpha\le1\). For \(\alpha=\tau\), the population counterpart of the empirical top-percentile family value. \\
\(N_T^\Phi(v)\) &
Number of iterations \(t<T\) whose selected lineage contains \(v\). \\
\(\mathcal I_\Phi,\ \mathcal R_\Phi(v)\) &
Nodes visited infinitely often by selected lineages, and children of \(v\) with this property, respectively. \\
\(\rho_{\Phi,v}(q),\ \overline\ell_{\Phi,v}\) &
Limiting averages of conditional high- and weak-reward probabilities over iterations routed through \(v\). \\
\bottomrule
\end{tabular}
\end{table}

\subsection{Formal Setup}
\label{app:rooted_genealogy_process}

We consider EvoTreeNAD with a family-value rule \(\Phi\) and denote the
resulting stochastic process by \((\Omega_t^\Phi)_{t\ge0}\).
The process state \(\Omega_t^\Phi\) includes the genealogy \(\mathcal G_t^\Phi\)
after \(t\) expansions and the information retained for subsequent variation.
The initial genealogy consists only of the empty root \(\rho\).
Each non-root node \(v\) represents an architecture with reward
\(R^\Phi(v)\in[a,b]\), and every node has a fixed child capacity
\(W(v)\in\mathbb N_{+}\).

For a non-root node \(v\), its descendant-family reward multiset is
\begin{equation}
\mathcal H_t^\Phi(v)
=
\biguplus_{u\in\{v\}\cup\operatorname{Desc}_t^\Phi(v)}\{R^\Phi(u)\},
\label{eq:app_family_record}
\end{equation}
where \(\operatorname{Desc}_t^\Phi(v)\) denotes the set of strict descendants
of \(v\) in \(\mathcal G_t^\Phi\).
The family-value rule \(\Phi\) assigns \(v\) the value
\(V_t^\Phi(v)=\Phi(\mathcal H_t^\Phi(v))\) used for routing.
For a nonempty reward multiset \(\mathcal H=\{r_1,\ldots,r_n\}\) with rewards
ordered as \(r_{[1]}\ge\cdots\ge r_{[n]}\), set
\(k_\tau(n)=\max\{1,\lceil\tau n\rceil\}\) for \(0<\tau\le1\) and define
\begin{equation}
\Phi_{\max}(\mathcal H)=r_{[1]},
\qquad
\Phi_{\mathrm{mean}}(\mathcal H)=\frac1n\sum_{i=1}^n r_i,
\qquad
\Phi_{\mathrm{top}}(\mathcal H)
=\frac1{k_\tau(n)}\sum_{i=1}^{k_\tau(n)}r_{[i]}.
\label{eq:app_family_rules}
\end{equation}
EvoTreeNAD uses \(\Phi_{\mathrm{top}}\), while \(\Phi_{\mathrm{mean}}\) and
\(\Phi_{\max}\) define its mean and maximum variants.
We use \(\mathrm{top}\), \(\mathrm{mean}\), and \(\max\) for the
corresponding family-value rules.

Routing starts at \(\rho\) and stops at the first node with remaining child
capacity. At each saturated node \(v\), the next node is
\begin{equation}
\min_{\prec}\operatorname*{arg\,max}_{c\in\operatorname{Ch}_t^\Phi(v)}
V_t^\Phi(c),
\label{eq:app_rooted_routing}
\end{equation}
where \(\operatorname{Ch}_t^\Phi(v)\) denotes the children of \(v\) and
\(\prec\) is a fixed order for resolving ties. Let \(L_t^\Phi\) denote the
resulting root-to-node lineage.

The variation context \(C_t^\Phi\) is constructed from the process state
\(\Omega_t^\Phi\) and selected lineage \(L_t^\Phi\). It determines the
conditional law of the next architecture and reward. The process evolves
according to
\begin{equation}
\begin{aligned}
(A_{t+1}^\Phi,R_{t+1}^\Phi)
&\sim K(\,\cdot\mid C_t^\Phi),\\
\Omega_{t+1}^\Phi
&\sim\operatorname{Grow}(\cdot\mid\Omega_t^\Phi,L_t^\Phi,
A_{t+1}^\Phi,R_{t+1}^\Phi),
\end{aligned}
\label{eq:app_genealogy_transition}
\end{equation}
where \(A_{t+1}^\Phi\) and \(R_{t+1}^\Phi\) denote the generated
architecture and its reward, respectively. The state-update kernel
\(\operatorname{Grow}\) adds the generated architecture as a child of the
selected endpoint and updates the retained variation information.
All three policies use the same context construction, variation and state-update
kernels, capacity rule, and tie rule. Each policy's routing decisions determine the selected
lineages and their variation contexts, thereby shaping the architectures
generated and the genealogy that develops.

\subsection{Properties of Family-Value Rules}
\label{app:top_fraction_evidence}

Proposition~\ref{prop:app_maximum_greedy} establishes that EvoTreeNAD with
\(\Phi=\Phi_{\max}\) is pathwise equivalent to best-of-\(N\) greedy
continuation. Proposition~\ref{prop:top_fraction_evidence} quantifies the
sensitivity of top-percentile family values to individual rewards and shows
that maintaining a high value as a family grows requires a nonvanishing
fraction of high-reward descendants.

In best-of-\(N\) greedy continuation, the current parent \(v\) generates
\(N=W(v)\) children in succession. The child with the highest immediate reward
becomes the next parent, and the process repeats without returning to unchosen
siblings. We use the same context construction, variation and state-update
kernels, child capacities, and tie rule as in the genealogy-growth process.

\begin{proposition}[Family maximum and best-of-\(N\) greedy continuation]
\label{prop:app_maximum_greedy}
EvoTreeNAD with \(\Phi=\Phi_{\max}\) is pathwise equivalent to best-of-\(N\)
greedy continuation. Starting from the same initial state and using the same
random draws for variation and state updates, the two processes select the
same lineage and generate the same architecture and reward at every iteration.
\end{proposition}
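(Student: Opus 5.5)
Our plan is to argue by induction on the iteration index \(t\). We couple the two processes through shared random draws and show that their states coincide, \(\Omega_t^{\max}=\Omega_t^{\mathrm{greedy}}\), and that the current greedy parent equals the endpoint selected by max-routing. The key structural fact is monotonicity of \(\Phi_{\max}\) along ancestry. Since \(\mathcal H_t(u)\subseteq\mathcal H_t(v)\) whenever \(v\preceq_t u\), we have \(V_t^{\max}(v)=\max_{u\in\mathcal D_t(v)}R(u)\). Once a child \(c\) of a saturated node \(v\) is chosen and development continues only below \(c\), every later reward generated below \(c\) can raise \(V^{\max}(c)\) but never lowers it. Meanwhile the values of its unchosen siblings stay frozen at their own maximum rewards, because nothing is ever added below them.

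First we describe the greedy process precisely, with the state-dependent parent \(p_t\). If \(|\operatorname{Ch}_t(p_t)|<W(p_t)\), the next child goes under \(p_t\). Otherwise \(p_t\) is replaced by the child of highest immediate reward, with the same tie order \(\prec\), and this repeats. Second, we state the inductive invariant. At time \(t\) the genealogy consists of a path \(\rho=v_0,v_1,\ldots,v_m=p_t\) such that:
\begin{itemize}
\item each \(v_{j+1}\) is the \(\prec\)-least argmax of immediate reward among the children of \(v_j\);
\item every child of \(v_j\) other than \(v_{j+1}\) is a leaf;
\item every \(v_j\) with \(j<m\) is saturated;
\item \(p_t\) has remaining capacity, or is saturated with all children leaves.
\end{itemize}
Third, we verify that max-routing under this invariant follows exactly this path. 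At a saturated \(v_j\), the sibling leaves have \(V^{\max}\) equal to their rewards. The value satisfies \(V^{\max}(v_{j+1})\ge R(v_{j+1})\ge R(c)\) for every sibling \(c\), since \(v_{j+1}\) maximized immediate reward when chosen.

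The main obstacle is tie-breaking. It is possible that \(V^{\max}(v_{j+1})=R(c)\) for a sibling \(c\prec v_{j+1}\) that tied at selection time. We first check that the \(\prec\)-least argmax at selection time was \(v_{j+1}\) itself, so no such \(c\) with \(c\prec v_{j+1}\) and \(R(c)=R(v_{j+1})\) exists. Any sibling with \(R(c)=V^{\max}(v_{j+1})\) must then have \(R(c)\le R(v_{j+1})\le V^{\max}(v_{j+1})\), so either it is not an argmax or it loses the tie to \(v_{j+1}\) under \(\prec\). This shows the tie rule \eqref{eq:app_rooted_routing} returns \(v_{j+1}\). We will need \(\prec\) to be a fixed order that is consistent across time, and that consistency is what makes this step go through.

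At \(p_t\), either it has capacity and both processes stop there, or it is saturated with all children leaves. In that case max-routing picks the \(\prec\)-least child of highest reward, which is exactly the greedy update, and the new parent is a leaf with capacity. The selected lineages therefore coincide. The contexts \(C_t\) are built by the shared construction from identical states and lineages, so the shared draws give identical \((A_{t+1},R_{t+1})\) and identical \(\operatorname{Grow}\) updates. Finally we re-verify the invariant for \(t+1\): the new node is a leaf attached to the endpoint, which preserves all four conditions, and this completes the induction.
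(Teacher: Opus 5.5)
Your proof is correct and follows essentially the same route as the paper's. Both arguments use three facts: when a node first saturates its children are leaves, so their \(\Phi_{\max}\) values equal their immediate rewards; expansions below the chosen child can only raise its family maximum while leaving its siblings' values frozen; and an induction over iterations with shared random draws then gives identical states. Your explicit whole-genealogy invariant, together with the check that the fixed order \(\prec\) resolves any later tie in favor of the originally chosen child, makes precise a step that the paper's proof states only briefly ("the next visit to \(v\) therefore selects \(c^\star(v)\) again").
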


\begin{proof}
Consider EvoTreeNAD with \(\Phi=\Phi_{\max}\). Before a node \(v\) becomes
saturated, routing stops at \(v\) whenever it reaches that node. Its children
therefore have no descendants when \(v\) first becomes saturated, so their
family maxima equal their immediate rewards. Let \(c^\star(v)\) be the
highest-reward child, with ties resolved by \(\prec\). An expansion below
\(c^\star(v)\) cannot decrease its family maximum and leaves the sibling
family values unchanged. The next visit to \(v\) therefore selects
\(c^\star(v)\) again. Induction over visits shows that this choice persists.
Thus EvoTreeNAD generates \(W(v)\) children before continuing through the
highest-reward child, without returning to unchosen siblings.

We now compare this process with best-of-\(N\) greedy continuation using the
same random draws. Their states agree initially. Suppose they agree before
an iteration. The selection property above gives the same root-to-node
lineage in both processes. Applying the same context construction to these
identical states and lineages gives the same variation context.
The kernel \(K\), evaluated at this context with
the same random draw, produces the same architecture and reward. The shared
state-update kernel \(\operatorname{Grow}\), using the same update draw,
then gives identical next states. Induction over iterations establishes the
claimed pathwise equivalence.
\end{proof}

\begin{proposition}[Support and bounded influence]
\label{prop:top_fraction_evidence}
Fix \(0<\tau\le1\). Let \(x=(x_1,\ldots,x_n)\in[a,b]^n\), with
\(x_{[1]}\ge\cdots\ge x_{[n]}\) denoting their descending order.
Set \(k=\max\{1,\lceil\tau n\rceil\}\).
The top-\(k\) average \(M_{k,n}(x)=k^{-1}\sum_{i=1}^k x_{[i]}\)
gives the top-percentile family value for these rewards and satisfies
\begin{equation}
M_{k,n}(x)
=
\max_{\substack{w_i\ge0,\ \sum_iw_i=1\\
\lVert w\rVert_\infty\le1/k}}
\sum_{i=1}^n w_i x_i.
\label{eq:top_fraction_capped_weights}
\end{equation}
For a fixed family size, changing one reward by \(\delta\) changes
\(M_{k,n}(x)\) by at most \(|\delta|/k\). Moreover, for
\(N_q=|\{i:x_i\ge q\}|\) and every \(q<b\),
\begin{equation}
\frac{N_q}{n}
\ge
\frac{k}{n}\frac{(M_{k,n}(x)-q)_+}{b-q}.
\label{eq:top_fraction_support_bound}
\end{equation}
Consequently, maintaining a family value a fixed amount above \(q\) as the
family grows requires a nonvanishing fraction of rewards at least \(q\).
Finitely many early high rewards cannot maintain this gap if subsequent
rewards are at most \(q\).
\end{proposition}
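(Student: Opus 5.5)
The plan is to prove the three claims in turn: the capped-weight representation, then the bounded-influence estimate, then the support bound and its consequence.

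\textbf{Variational identity.} Sort the entries and view \eqref{eq:top_fraction_capped_weights} as a linear program over the polytope $\{w\ge0,\ \sum_i w_i=1,\ w_i\le 1/k\}$. This set is nonempty because $k\le n$; indeed $\lceil\tau n\rceil\le n$ for $\tau\le1$.

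\emph{Upper bound.} For any feasible $w$, a rearrangement/exchange argument shows $\sum_i w_ix_i\le M_{k,n}(x)$. Write
\[
\sum_i w_ix_i-M_{k,n}(x)=\sum_{i\le k}(w_{(i)}-1/k)\,x_{[i]}+\sum_{i>k}w_{(i)}\,x_{[i]},
\]
where $w_{(i)}$ is the weight on the $i$-th largest entry. The first-block coefficients are nonpositive, the tail coefficients are nonnegative, and the coefficients sum to zero. Since every first-block value is at least $x_{[k]}$ and every tail value is at most $x_{[k]}$, we get
\[
\sum_i w_ix_i-M_{k,n}(x)\le x_{[k]}\sum_i(\text{coefficients})=0 .
\]

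\emph{Lower bound.} Putting weight $1/k$ on the indices of the $k$ largest entries is feasible and attains $M_{k,n}(x)$.

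\textbf{Bounded influence.} For each fixed feasible $w$, changing one coordinate by $\delta$ changes $\sum_i w_ix_i$ by $w_i\delta$, whose absolute value is at most $|\delta|/k$. A pointwise maximum of functions that are each $1/k$-Lipschitz in that coordinate is again $1/k$-Lipschitz. Applying this to the representation above gives the claim with no case analysis on how the ordering changes.

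\textbf{Support bound.} Fix $q<b$ and assume $M:=M_{k,n}(x)>q$; otherwise the right side is $0$ and the bound is trivial. Let $m$ be the number of entries among the top $k$ that are at least $q$, so $m\le N_q$. Each of these $m$ entries is at most $b$, and each of the remaining $k-m$ top entries is below $q$. Hence
\[
kM\le mb+(k-m)q,
\]
which rearranges to $m\ge k(M-q)/(b-q)$. Dividing by $n$ and using $N_q\ge m$ gives \eqref{eq:top_fraction_support_bound}.

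\textbf{Consequence.} As $n$ grows, $k/n\ge\tau$, since $\lceil\tau n\rceil\ge\tau n$. So if $M_{k,n}\ge q+\varepsilon$ along the growing family, then
\[
\frac{N_q}{n}\ge\frac{\tau\varepsilon}{b-q}>0,
\]
a nonvanishing fraction. Conversely, suppose only finitely many, say $N_0$, rewards are at least $q$ and all later rewards are at most $q$. Then $N_q$ stays bounded while $n\to\infty$, so $N_q/n\to0$, and the gap $(M-q)_+$ must tend to $0$.

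One caveat concerns the ties: entries equal to $q$ are counted in $N_q$. The support bound treats the "below $q$" entries loosely as being at most $q$, and the inequality $kM\le mb+(k-m)q$ still holds in that case, so the ties cause no problem.

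\textbf{Main obstacle.} Everything here is elementary. The only step needing care is the upper-bound direction of the variational identity, in particular handling ties in the ordering cleanly. I would state the exchange argument with explicit sorted indices so that ties do not matter.
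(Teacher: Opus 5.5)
Your proofs of the capped-weight identity, the $|\delta|/k$ influence bound, and the support bound are correct and essentially the paper's. The differences are cosmetic. Your sorted-coefficient inequality replaces the paper's iterative mass-transfer argument. Counting the $m$ entries at least $q$ within the top $k$ avoids the paper's case split on $N_q<k$. You also make explicit the step $k/n\ge\tau$ behind the nonvanishing-fraction claim, which the paper leaves implicit.

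The gap is in the final claim. The statement, and the paper's proof, allow finitely many rewards \emph{exceeding} $q$ with all later rewards \emph{at most} $q$. You replace this by ``only finitely many rewards are at least $q$.'' Because $N_q$ counts entries $x_i\ge q$, later rewards exactly equal to $q$ are counted. If infinitely many such rewards occur, $N_q$ grows linearly in $n$, your step ``$N_q$ stays bounded'' fails, and the support bound at threshold $q$ yields nothing. This is where ties actually matter, rather than in the support bound where you checked them.

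The repair is short. Let $r$ be the number of rewards strictly above $q$. At most $\min\{r,k\}$ of the top $k$ entries exceed $q$, and the rest are at most $q$. Hence $kM\le kq+\min\{r,k\}(b-q)$, giving $M\le q+(b-q)\min\{r,k\}/k\to q$ as $k\to\infty$; this is the paper's argument. Equivalently, apply your support bound at each $q'\in(q,b)$, where $N_{q'}\le r$ is bounded. This gives $\limsup M\le q'$, and you then let $q'\downarrow q$.
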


\begin{proof}
For \(1\le k\le n\), let
\[
\mathcal W_{n,k}
=
\left\{w\in\mathbb R_+^n:
\sum_iw_i=1,\ \lVert w\rVert_\infty\le\frac1k\right\}.
\]
If positive weight is assigned below the top \(k\) rewards while some top-
\(k\) coordinate has weight below \(1/k\), transferring the available mass to
the higher reward cannot decrease the objective. Repeating this exchange puts
weight \(1/k\) on the top \(k\) rewards and zero elsewhere, proving
Eq.~\eqref{eq:top_fraction_capped_weights}.

The variational form also gives
\[
\left|M_{k,n}(x+\delta e_j)-M_{k,n}(x)\right|
\le \sup_{w\in\mathcal W_{n,k}}w_j|\delta|
\le\frac{|\delta|}{k}.
\]
For the support bound, if \(N_q<k\), the top \(k\) rewards include all
\(N_q\) rewards at least \(q\). Bounding these by \(b\) and the remaining
\(k-N_q\) by \(q\) gives
\[
M_{k,n}(x)\le q+(b-q)\frac{N_q}{k}.
\]
If \(N_q\ge k\), this bound follows directly from
\(M_{k,n}(x)\le b\). Rearranging and using \(N_q\ge0\) gives
Eq.~\eqref{eq:top_fraction_support_bound}.

Finally, suppose only \(r<\infty\) rewards in a growing family exceed
\(q\), while all subsequent rewards are at most \(q\). When the family has rewards \(x_1,\ldots,x_N\),
\[
M_{k_\tau(N),N}(x_1,\ldots,x_N)
\le
q+\frac{\min\{r,k_\tau(N)\}}{k_\tau(N)}(b-q).
\]
Hence \(\limsup_{N\to\infty}M_{k_\tau(N),N}(x_1,\ldots,x_N)\le q\),
so the family value cannot maintain a fixed positive gap above \(q\).
The family value also eventually falls below that of any sibling with
a fixed family value greater than \(q\). This proves the proposition.
\end{proof}

\subsection{Stationary Selection-Variation}
\label{app:stationary_selection_variation}

We study the long-run behavior of the genealogy-growth process by jointly
tracking the reward law induced by each selected lineage context and the
realized reward. Theorem~\ref{thm:stationary_selection_variation}
establishes the existence of stationary empirical regimes for this joint
sequence. Within each regime, the distribution of generated rewards is
the mixture of the lineage-conditioned reward laws.
The theorem also bounds high-reward probability from below using the
average top-percentile mean across these conditional reward laws.

Let \(\mathcal F_t^\Phi\) denote the \(\sigma\)-algebra generated by the history
available before iteration \(t\). Define \(F_t^\Phi\) as the conditional law
of the next reward given this history, and let \(Z_t^\Phi\) denote the pair
consisting of this law and the realized reward,
\begin{equation}
F_t^\Phi
=\mathcal L(R_{t+1}^\Phi\mid\mathcal F_t^\Phi)
\in\mathcal P([a,b]),
\qquad
Z_t^\Phi=(F_t^\Phi,R_{t+1}^\Phi).
\label{eq:app_conditional_reward_law}
\end{equation}
\(F_t^\Phi\) is the reward marginal of \(K(\cdot\mid C_t^\Phi)\). Let
\(\mathsf E=\mathcal P([a,b])\times[a,b]\) be the state space of \(Z_t^\Phi\).
Equip \(\mathcal P([a,b])\) with the weak topology and use the corresponding
product topologies on \(\mathsf E\) and \(\mathsf E^{\mathbb N_0}\).
Define the empirical occupation measure on the sequence space
\(\mathsf E^{\mathbb N_0}\) by
\begin{equation}
\mathsf Q_{\Phi,T}
=\frac1T\sum_{t<T}\delta_{\sigma^t\mathbf Z^\Phi},
\qquad
\mathbf Z^\Phi=(Z_0^\Phi,Z_1^\Phi,\ldots),
\label{eq:app_empirical_orbit}
\end{equation}
where \(\sigma\) is the left shift on \(\mathsf E^{\mathbb N_0}\).
This is the empirical distribution of the shifted sequences
\(\sigma^t\mathbf Z^\Phi\), \(0\le t<T\).

A weak subsequential limit \(\mathsf Q\) of
\((\mathsf Q_{\Phi,T})_{T\ge1}\) is called a \emph{reachable stationary regime}.
Let \(\nu_{\mathsf Q}\) and \(\Lambda_{\mathsf Q}\) denote the two marginals of the
time-zero coordinate pair under \(\mathsf Q\), corresponding to the
conditional reward law and realized reward, respectively. These are the weak
limits of the respective empirical distributions along the same subsequence.
For \(0<\alpha\le1\), define the top-percentile mean and its average under
\(\nu_{\mathsf Q}\) by
\[
U_\alpha(F)=\frac1\alpha\int_{1-\alpha}^1F^{-1}(u)\,du,
\qquad
\overline U_{\alpha,\mathsf Q}
=\int U_\alpha(F)\,\nu_{\mathsf Q}(dF),
\]
where \(F^{-1}\) is the quantile function of \(F\), defined as the generalized
inverse of its cumulative distribution function.

\begin{theorem}[Stationary selection-variation]
\label{thm:stationary_selection_variation}
Every infinite realization of the genealogy-growth process admits weak
subsequential limits of \(\mathsf Q_{\Phi,T}\), and every such limit is
shift-stationary. Almost surely, every reachable stationary regime
\(\mathsf Q\) satisfies the following properties.
\begin{enumerate}
\item \emph{Reward distributions and exceedance frequencies.}
For every Borel set \(A\subseteq[a,b]\),
\begin{equation}
\Lambda_{\mathsf Q}(A)
=
\int_{\mathcal P([a,b])}F(A)\,\nu_{\mathsf Q}(dF).
\label{eq:stationary_selection_variation_mixture}
\end{equation}
If \(\mathsf Q_{\Phi,T_j}\Rightarrow\mathsf Q\), then at every threshold
\(q\) satisfying \(\Lambda_{\mathsf Q}(\{q\})=0\),
\begin{equation}
\lim_{j\to\infty}\frac1{T_j}\sum_{t<T_j}
\mathbf1\{R_{t+1}^\Phi\ge q\}
=
\Lambda_{\mathsf Q}([q,b])
=
\int F([q,b])\,\nu_{\mathsf Q}(dF).
\label{eq:stationary_realized_prevalence}
\end{equation}
For each fixed \(q<b\), the difference between the empirical exceedance
frequency and the average conditional exceedance probability converges to
zero almost surely along the full iteration sequence.

\item \emph{High-reward probability bound.}
For every \(0<\alpha\le1\) and \(q<b\),
\begin{equation}
\Lambda_{\mathsf Q}([q,b])
\ge
\frac{\alpha}{b-q}
\bigl(\overline U_{\alpha,\mathsf Q}-q\bigr)_+.
\label{eq:stationary_upper_fraction_bound}
\end{equation}
\end{enumerate}
\end{theorem}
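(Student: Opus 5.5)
The proof has four steps: compactness, a martingale argument, an identification of the limit, and a pointwise quantile inequality.

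\emph{Compactness and shift-invariance.} The space \(\mathsf E=\mathcal P([a,b])\times[a,b]\) is compact metrizable, since \(\mathcal P([a,b])\) with the weak topology is compact by Prokhorov. Tychonoff then makes \(\mathsf E^{\mathbb N_0}\) compact metrizable, so \(\mathcal P(\mathsf E^{\mathbb N_0})\) is weakly compact. Hence every realization yields subsequential limits of \(\mathsf Q_{\Phi,T}\). For shift-invariance, take any bounded continuous \(f\) on the sequence space. Then
\[
\bigl|\mathsf Q_{\Phi,T}(f\circ\sigma)-\mathsf Q_{\Phi,T}(f)\bigr|\le 2\|f\|_\infty/T .
\]
Because \(\sigma\) is continuous, passing to the limit gives \(\mathsf Q\circ\sigma^{-1}=\mathsf Q\). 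This part is deterministic.

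\emph{Almost-sure mixture identity.} This is the main obstacle, because the statement has to hold simultaneously for every limit and for all sets \(A\). The plan is to first prove it for a countable family of test functions and then upgrade.

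Fix a countable convergence-determining class \(\{h_m\}\) of continuous functions on \([a,b]\), for example polynomials with rational coefficients. For each \(m\), the increments \(h_m(R_{t+1}^\Phi)-\int h_m\,dF_t^\Phi\) are bounded martingale differences with respect to \((\mathcal F_{t+1}^\Phi)\). This uses only that \(F_t^\Phi\) is the conditional law of \(R_{t+1}^\Phi\) given \(\mathcal F_t^\Phi\). The strong law for bounded martingale differences (Azuma plus Borel--Cantelli, or Kronecker) gives
\[
\frac1T\sum_{t<T}\Bigl(h_m(R_{t+1}^\Phi)-\int h_m\,dF_t^\Phi\Bigr)\to0 \quad\text{a.s.}
\]
Intersecting these countably many null-set complements yields one probability-one event \(\Omega_0\).

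On \(\Omega_0\), fix any subsequence with \(\mathsf Q_{\Phi,T_j}\Rightarrow\mathsf Q\). Two functionals of the time-zero pair \((F,r)\) are continuous on \(\mathsf E\): the map \((F,r)\mapsto h_m(r)\), and the map \((F,r)\mapsto\int h_m\,dF\), which is weakly continuous in \(F\). Their empirical averages therefore converge to \(\int h_m\,d\Lambda_{\mathsf Q}\) and \(\int\!\int h_m\,dF\,\nu_{\mathsf Q}(dF)\), respectively. Since the two averages differ by a vanishing amount, \(\Lambda_{\mathsf Q}\) and the mixture \(\int F\,\nu_{\mathsf Q}(dF)\) agree on every \(h_m\). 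They are therefore equal as measures, which gives Eq.~\eqref{eq:stationary_selection_variation_mixture} for all Borel \(A\).

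\emph{Exceedance frequencies.} The empirical reward marginal along \(T_j\) converges weakly to \(\Lambda_{\mathsf Q}\). By the portmanteau theorem, the frequency of \(\{R\ge q\}\) converges to \(\Lambda_{\mathsf Q}([q,b])\) whenever \(q\) is a continuity point, i.e.\ \(\Lambda_{\mathsf Q}(\{q\})=0\). The mixture identity then supplies the second equality in Eq.~\eqref{eq:stationary_realized_prevalence}. For the full-sequence statement at a fixed \(q\), I apply the same martingale strong law to the bounded differences \(\mathbf 1\{R_{t+1}^\Phi\ge q\}-F_t^\Phi([q,b])\).

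\emph{High-reward bound.} I first prove a pointwise inequality for each \(F\): \(F([q,b])\ge \frac{\alpha}{b-q}(U_\alpha(F)-q)_+\). This is the population analogue of Eq.~\eqref{eq:top_fraction_support_bound}. Let \(p=F([q,b])\).

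If \(p\ge\alpha\), then \(U_\alpha(F)\le b\) gives \(\alpha(U_\alpha-q)\le\alpha(b-q)\le p(b-q)\).

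If \(p<\alpha\), split the top-\(\alpha\) quantile integral into the top-\(p\) portion, where quantiles are at most \(b\), and the remaining portion, where quantiles are below \(q\). This gives
\[
\alpha U_\alpha(F)\le p\,b+(\alpha-p)q ,
\]
that is, \(\alpha(U_\alpha(F)-q)\le p(b-q)\).

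Finally, integrate the pointwise inequality against \(\nu_{\mathsf Q}\) and use convexity of \(x\mapsto x_+\) (Jensen) to move the positive part outside the integral. Combined with Eq.~\eqref{eq:stationary_selection_variation_mixture}, this yields Eq.~\eqref{eq:stationary_upper_fraction_bound}. Measurability of \(F\mapsto U_\alpha(F)\) holds because \(U_\alpha\) is lower semicontinuous in the weak topology.
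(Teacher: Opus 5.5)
Your proposal is correct and follows the paper's proof essentially step for step: compactness plus the telescoping bound for shift-invariance, then a martingale strong law on a countable sup-norm-dense class (your rational polynomials play the role of the paper's dense subset of \(C([a,b])\)) to identify \(\Lambda_{\mathsf Q}\) with the \(\nu_{\mathsf Q}\)-mixture, then the portmanteau theorem for exceedance frequencies, and finally the pointwise inequality \(\alpha(U_\alpha(F)-q)\le(b-q)F([q,b])\) integrated against \(\nu_{\mathsf Q}\) with Jensen. Your case split on \(p\) versus \(\alpha\) and your measurability remark for \(U_\alpha\) supply details the paper leaves implicit; \(U_\alpha\) is in fact weakly continuous on \([a,b]\), since it is \(1/\alpha\)-Lipschitz in \(W_1\).
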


\begin{proof}
Since \([a,b]\) is compact, \(\mathcal P([a,b])\) is compact metrizable
under the weak topology. Thus \(\mathsf E\) and its countable product
\(\mathsf E^{\mathbb N_0}\) are compact metrizable. By weak compactness of
\(\mathcal P(\mathsf E^{\mathbb N_0})\), the sequence
\((\mathsf Q_{\Phi,T})_{T\ge1}\) has a weakly convergent subsequence.
For every bounded continuous \(f\),
\[
\mathsf Q_{\Phi,T}(f-f\circ\sigma)
=\frac{f(\mathbf Z^\Phi)-f(\sigma^T\mathbf Z^\Phi)}{T}
\longrightarrow0.
\]
Every subsequential limit is shift invariant.

For \(h\in C([a,b])\), set
\[
D_{t+1}^{\Phi,h}
=h(R_{t+1}^\Phi)-\int h(r)F_t^\Phi(dr).
\]
These are bounded martingale differences. The martingale strong law, applied
simultaneously to a countable sup-norm-dense subset of \(C([a,b])\), gives
\(T^{-1}\sum_{t<T}D_{t+1}^{\Phi,h}\to0\) on one probability-one event.
The map \((F,r)\mapsto h(r)-\int h\,dF\), composed with the
first-coordinate projection, is bounded continuous on
\(\mathsf E^{\mathbb N_0}\).
Passing to any weak subsequential limit of \(\mathsf Q_{\Phi,T}\) and extending
by uniform approximation yields
\[
\int h(r)\Lambda_{\mathsf Q}(dr)
=
\int\!\left[\int h(r)F(dr)\right]\nu_{\mathsf Q}(dF)
\]
for every continuous \(h\). Equality of these probability measures proves
Eq.~\eqref{eq:stationary_selection_variation_mixture} for every Borel set.

The generated-reward marginal of
\(\mathsf Q_{\Phi,T_j}\) is
\(T_j^{-1}\sum_{t<T_j}\delta_{R_{t+1}^\Phi}\). Marginalization is continuous
under weak convergence, so these empirical reward measures converge weakly to
\(\Lambda_{\mathsf Q}\). If
\(\Lambda_{\mathsf Q}(\{q\})=0\), the set \([q,b]\) is a
\(\Lambda_{\mathsf Q}\)-continuity set. The Portmanteau theorem and
Eq.~\eqref{eq:stationary_selection_variation_mixture} then give
Eq.~\eqref{eq:stationary_realized_prevalence}.

For each fixed \(q<b\), the sequence
\(\mathbf1\{R_{t+1}^\Phi\ge q\}-F_t^\Phi([q,b])\)
is also a bounded martingale-difference sequence. Its strong law gives
\begin{equation}
\frac1T\sum_{t<T}
\left[
\mathbf1\{R_{t+1}^\Phi\ge q\}-F_t^\Phi([q,b])
\right]
\longrightarrow0
\quad\text{almost surely}.
\label{eq:stationary_prevalence_transfer}
\end{equation}

For any law \(F\), \(0<\alpha\le1\), and \(q<b\), the mass at or above
\(q\) within the highest-reward \(\alpha\) fraction is at most \(F([q,b])\).
Bounding rewards at or above \(q\) by \(b\) and those below \(q\) by \(q\) gives
\[
\alpha U_\alpha(F)\le\alpha q+(b-q)F([q,b]).
\]
Rearranging and using \(F([q,b])\ge0\) yields
\[
F([q,b])
\ge
\frac{\alpha}{b-q}\bigl(U_\alpha(F)-q\bigr)_+.
\]
Integrating under \(\nu_{\mathsf Q}\) and applying Jensen's inequality to the
positive-part map proves Eq.~\eqref{eq:stationary_upper_fraction_bound}.
\end{proof}

Theorem~\ref{thm:stationary_selection_variation} connects the variation
contexts explored by EvoTreeNAD to the architectures it produces during
continued genealogy growth. In each reachable stationary regime, if the
top-percentile means of the encountered conditional reward laws average
above \(q\), the theorem gives a positive lower bound on the probability of
generating architectures with rewards at least \(q\).

\subsection{Top-Percentile Routing and High-Reward Production}
\label{app:genealogy_flow_production}

Under the variation assumptions below, we connect sustained top-percentile
family values to high-reward production in these regimes and characterize
how mean and maximum routing differ.

\paragraph{Lineage-conditioned variation.}
We keep the component reward distributions fixed across lineage contexts to
compare how family-value rules respond to changes in high- and weak-outcome
probabilities.
Fix \(0<\tau<1\). Let \(r_0\) be the intermediate reward and \(q\) the
high-reward threshold, with \(a\le r_-<r_0<q\le r_+\le b\).
Let \(F_-\) and \(F_+\) be fixed weak- and high-reward distributions
supported on \([a,r_-]\) and \([r_+,b]\), respectively.
For a lineage context \(z\), define the conditional reward law
\begin{equation}
F_z
=
\ell(z)F_-
+\bigl(1-p_q(z)-\ell(z)\bigr)\delta_{r_0}
+p_q(z)F_+,
\label{eq:nad_opportunity_risk_law}
\end{equation}
where \(p_q(z)\) and \(\ell(z)\) denote the probabilities of high- and
weak-reward outcomes conditional on \(z\), respectively. We assume
\(p_q(z)+\ell(z)\le1\), \(0<p_q(z)<\tau\), and \(0<\ell(z)<1-\tau\).
The same conditional law is used by every routing rule at the same context.
Let \(\mu_-\) and \(\mu_+\) denote the means of \(F_-\) and \(F_+\),
respectively. The highest-reward \(\tau\) fraction contains all
high-reward outcomes and no weak outcomes. Therefore,
\begin{equation}
U_\tau(F_z)
=r_0+\frac{\mu_+-r_0}{\tau}p_q(z),
\qquad
\int rF_z(dr)
=r_0+(\mu_+-r_0)p_q(z)-(r_0-\mu_-)\ell(z).
\label{eq:app_opportunity_risk_scores}
\end{equation}
Averaging the reward laws of adaptively selected contexts changes only their
mixture weights; the component distributions remain fixed. Hence the same
identities hold for the averaged reward law.
Write \(F_{p,\ell}\) for the same mixture with high- and weak-outcome
probabilities \(p\) and \(\ell\). At iteration \(t\), the context is
\(C_t^\Phi\), so \(F_t^\Phi=F_{C_t^\Phi}\).

\paragraph{Recurrent families.}
Fix a family-value rule \(\Phi\) and an infinite realization of its genealogy-growth process.
For every created node \(v\), let \(N_T^\Phi(v)\) denote the number of
iterations before \(T\) whose selected lineage contains \(v\),
\begin{equation}
N_T^\Phi(v)=\sum_{t<T}\mathbf1\{v\in L_t^\Phi\}.
\label{eq:app_routed_call_count}
\end{equation}

Let \(\mathcal I_\Phi\) denote the set of recurrent nodes, which are visited
infinitely often by selected lineages,
\[
\mathcal I_\Phi=\{v:N_T^\Phi(v)\to\infty\}.
\]
For a saturated node \(v\), let \(\operatorname{Ch}(v)\) denote its children.
Its recurrent children form the set
\begin{equation}
\mathcal R_\Phi(v)
=
\{c\in\operatorname{Ch}(v):N_T^\Phi(c)\to\infty\}.
\label{eq:app_repeatedly_routed_children}
\end{equation}

For a recurrent node \(v\), let \(t_n^{\Phi,v}\) denote the \(n\)-th
iteration whose selected lineage contains \(v\),
\[
t_n^{\Phi,v}
=
\inf\{t\ge0:N_{t+1}^\Phi(v)=n\}.
\]
For every \(v\in\mathcal I_\Phi\), including the root, we assume
\begin{equation}
\frac1n\sum_{i=1}^n p_q(C_{t_i^{\Phi,v}}^\Phi)
\longrightarrow \rho_{\Phi,v}(q),
\qquad
\frac1n\sum_{i=1}^n \ell(C_{t_i^{\Phi,v}}^\Phi)
\longrightarrow \overline\ell_{\Phi,v}.
\label{eq:app_routed_average_resolution}
\end{equation}
The quantities \(\rho_{\Phi,v}(q)\) and \(\overline\ell_{\Phi,v}\) are the
limiting average conditional probabilities of high- and weak-reward outcomes,
respectively, over iterations whose selected lineages contain \(v\).
These limits may differ across policies because the policies generate different
contexts.

For every created non-root node \(c\) and every \(T\) at or after its creation,
let \(\widehat\Lambda_{c,T}\) denote the empirical reward law of its
descendant family. The family size and this law satisfy
\begin{equation}
|\mathcal H_T^\Phi(c)|=1+N_T^\Phi(c),
\qquad
\widehat\Lambda_{c,T}
=\frac{\delta_{R^\Phi(c)}+\sum_{t<T:c\in L_t^\Phi}\delta_{R_{t+1}^\Phi}}
{1+N_T^\Phi(c)}.
\label{eq:app_family_flow_identity}
\end{equation}
On the iteration that creates \(c\), routing stops at its parent. Every later iteration whose route
contains \(c\) inserts exactly one architecture into the subtree rooted at
\(c\), and every later descendant is created by one such iteration. This proves
Eq.~\eqref{eq:app_family_flow_identity}.

\begin{lemma}[Descendant-family law and family-value convergence]
\label{lem:app_routed_call_law}
For every non-root node \(c\), let \(I_t^c=\mathbf1\{c\in L_t^\Phi\}\).
On one probability-one event, simultaneously for every created non-root node
\(c\) and every \(h\in C([a,b])\),
\begin{equation}
N_T^\Phi(c)\to\infty
\quad\Longrightarrow\quad
\frac1{N_T^\Phi(c)}
\sum_{t<T}I_t^c
\left(h(R_{t+1}^\Phi)-\int h\,dF_t^\Phi\right)
\longrightarrow0.
\label{eq:app_selected_call_martingale}
\end{equation}
The same statement holds at the root with \(I_t^\rho\equiv1\) and
\(N_T^\Phi(\rho)=T\).

Under the stated conditions on the variation law in
Eq.~\eqref{eq:nad_opportunity_risk_law} and the convergence assumption in
Eq.~\eqref{eq:app_routed_average_resolution}, the empirical family law of every
recurrent non-root node \(c\) converges along the full sequence of iterations,
\begin{equation}
\widehat\Lambda_{c,T}
\Rightarrow
F_{\rho_{\Phi,c}(q),\overline\ell_{\Phi,c}}.
\label{eq:app_family_law_identification}
\end{equation}
The top-percentile and full-family mean values computed from the same
reward multiset \(\mathcal H_T^\Phi(c)\) satisfy
\begin{align}
\lim_{T\to\infty}\Phi_{\mathrm{top}}\!\left(\mathcal H_T^\Phi(c)\right)
&=
r_0+\frac{\mu_+-r_0}{\tau}\rho_{\Phi,c}(q),
\label{eq:app_top_limiting_score}\\
\lim_{T\to\infty}\Phi_{\mathrm{mean}}\!\left(\mathcal H_T^\Phi(c)\right)
&=
r_0+(\mu_+-r_0)\rho_{\Phi,c}(q)
-(r_0-\mu_-)\overline\ell_{\Phi,c}.
\label{eq:app_mean_limiting_score}
\end{align}
The empirical frequencies of \(R_{t+1}^\Phi\ge q\) and
\(R_{t+1}^\Phi\le r_-\) over the same routed iterations converge to
\(\rho_{\Phi,c}(q)\) and \(\overline\ell_{\Phi,c}\), respectively.
\end{lemma}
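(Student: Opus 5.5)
The proof has three stages: a martingale strong law along routed iterations, identification of the limiting family law from the mixture structure, and continuity of the top-percentile and mean functionals under weak convergence.

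\textbf{Martingale strong law.} Fix a created node \(c\) and \(h\in C([a,b])\). The indicator \(I_t^c\) is \(\mathcal F_t^\Phi\)-measurable, because routing is a deterministic function of \(\Omega_t^\Phi\). Hence
\[
M_T^{c,h}=\sum_{t<T}I_t^c\Bigl(h(R_{t+1}^\Phi)-\int h\,dF_t^\Phi\Bigr)
\]
is a martingale with increments bounded by \(2\lVert h\rVert_\infty\). Its predictable quadratic variation is at most \(4\lVert h\rVert_\infty^2N_T^\Phi(c)\). The strong law for martingales with bounded increments gives \(M_T^{c,h}/N_T^\Phi(c)\to0\) almost surely on \(\{N_T^\Phi(c)\to\infty\}\); one can use \(M_T/\langle M\rangle_T\to0\) on \(\{\langle M\rangle_\infty=\infty\}\), or Freedman's inequality.

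For a single probability-one event, index nodes canonically by their creation time, so there are countably many, and take \(h\) from a countable sup-norm-dense subset of \(C([a,b])\). A countable intersection of full-measure events remains full measure. Uniform approximation then extends the conclusion to all \(h\in C([a,b])\), since the normalized sums are \(2\lVert h-h'\rVert_\infty\)-Lipschitz in \(h\). At the root, \(I^\rho\equiv1\) and this is the argument already used in Theorem~\ref{thm:stationary_selection_variation}. This step is the main technical care point, because \(c\) is a random object. I would resolve it by defining \(I_t^c\) through the creation-time label, so that for each fixed label the sequence is predictable.

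\textbf{Identification of the limit.} Rewrite the empirical family law using Eq.~\eqref{eq:app_family_flow_identity}. The initial atom \(\delta_{R^\Phi(c)}\) has weight \(1/(1+N_T^\Phi(c))\to0\). It therefore suffices to show
\[
\frac1{N_T^\Phi(c)}\sum_{t<T}I_t^cF_t^\Phi\Rightarrow F_{\rho_{\Phi,c}(q),\overline\ell_{\Phi,c}}.
\]
Since \(F_t^\Phi=F_{C_t^\Phi}\), Eq.~\eqref{eq:nad_opportunity_risk_law} shows that this averaged law equals the mixture \(F_{\bar p_n,\bar\ell_n}\). Here \(\bar p_n\) and \(\bar\ell_n\) are the averages of \(p_q\) and \(\ell\) over the first \(n=N_T^\Phi(c)\) routed contexts. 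The mixture is affine in these weights, and the component laws are fixed. By Eq.~\eqref{eq:app_routed_average_resolution}, \(\bar p_n\to\rho_{\Phi,c}(q)\) and \(\bar\ell_n\to\overline\ell_{\Phi,c}\), so \(\int h\,dF_{\bar p_n,\bar\ell_n}\to\int h\,dF_{\rho,\bar\ell}\). Combining this with the martingale step yields Eq.~\eqref{eq:app_family_law_identification}.

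The exceedance frequencies follow by the same argument applied to the indicators \(\mathbf1\{r\ge q\}\) and \(\mathbf1\{r\le r_-\}\). These are handled either directly as bounded martingale differences for each fixed threshold, or via continuity sets. The limit law places no mass in \((r_-,r_0)\cup(r_0,r_+)\), so continuous test functions sandwiching each indicator suffice.

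\textbf{Functional limits.} The mean is \(\int r\,d\widehat\Lambda_{c,T}\), with a continuous integrand. Its convergence to Eq.~\eqref{eq:app_mean_limiting_score} therefore follows from weak convergence together with Eq.~\eqref{eq:app_opportunity_risk_scores}.

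For the top-percentile value, note that \(\Phi_{\mathrm{top}}(\mathcal H_T)=M_{k,n}\) with \(k=\lceil\tau n\rceil\). This differs from \(U_\tau(\widehat\Lambda_{c,T})\) by at most \((b-a)/(\tau n)\), since the only discrepancy is at most one partially counted order statistic. The map \(F\mapsto U_\tau(F)=\tau^{-1}\int_{1-\tau}^1F^{-1}(u)\,du\) is continuous under weak convergence on \(\mathcal P([a,b])\), because quantile functions converge at almost every \(u\) and are bounded. Thus \(\Phi_{\mathrm{top}}\to U_\tau(F_{\rho,\bar\ell})\).

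It remains to check that the limiting weights satisfy \(\rho<\tau\) and \(\bar\ell<1-\tau\), in the closed sense. If \(\rho=\tau\), the identity \(U_\tau=r_0+(\mu_+-r_0)\rho/\tau\) still holds. Likewise, the weak mass below \(r_0\) stays outside the top \(\tau\) fraction when \(\bar\ell\le1-\tau\). Equation~\eqref{eq:app_opportunity_risk_scores} then gives Eq.~\eqref{eq:app_top_limiting_score}.
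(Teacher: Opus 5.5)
Your proposal is correct and follows essentially the same three-stage route as the paper: a martingale strong law along routed iterations made simultaneous over countably many node labels and a dense set of test functions, identification of the limit through the affine mixture structure, and continuity of the mean and top-percentile functionals. The differences are minor. Where the paper proves \(L^2\)-boundedness of the weighted martingale \(\sum_t I_t^cD_{t+1}/(1+N_t^\Phi(c))\) and applies Kronecker's lemma, you invoke the quadratic-variation form of the martingale strong law, and where the paper writes \(\Phi_{\mathrm{top}}=U_{\alpha_T}(\widehat\Lambda_{c,T})\) exactly and uses a Wasserstein bound, you use an explicit \((b-a)/(\tau n)\) comparison with \(U_\tau\); your check of the boundary cases \(\rho=\tau\) and \(\overline\ell=1-\tau\) also fills in a point the paper leaves implicit.
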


\begin{proof}
Fix \(h\in C([a,b])\) and a node address \(c\), taking \(I_t^c=0\) before
its creation, and write
\[
D_{t+1}=h(R_{t+1}^\Phi)-\int h\,dF_t^\Phi,
\qquad
\mathcal M_T=\sum_{t<T}\frac{I_t^cD_{t+1}}{1+N_t^\Phi(c)}.
\]
The weights are measurable with respect to the history before each iteration,
so \(\mathcal M_T\) is a martingale with
respect to \(\mathcal F_T^\Phi\). Since \(|D_{t+1}|\le2\|h\|_\infty\) and
the denominator at the \(n\)-th selected iteration is \(n\), orthogonality
of martingale increments gives
\[
\sup_T\mathbb E[\mathcal M_T^2]
\le4\|h\|_\infty^2\sum_{n\ge1}n^{-2}<\infty.
\]
Thus \(\mathcal M_T\) converges almost surely. On paths with \(N_T^\Phi(c)\to\infty\),
the series \(\sum_{n\ge1}D_{t_n^{\Phi,c}+1}/n\) converges, and
Kronecker's lemma gives \(n^{-1}\sum_{i=1}^nD_{t_i^{\Phi,c}+1}\to0\).
Between successive visits to \(c\), the numerator in
Eq.~\eqref{eq:app_selected_call_martingale} and the visit count
\(N_T^\Phi(c)\) remain unchanged. Applying this argument to a countable dense subset of the unit ball
of \(C([a,b])\), intersecting over the countable set of finite node addresses,
and extending by uniform approximation proves
Eq.~\eqref{eq:app_selected_call_martingale}. The same argument applies at the root.
For the two reward indicators, the conditional expectations are
\(p_q(C_t^\Phi)\) and \(\ell(C_t^\Phi)\).
The same argument, together with Eq.~\eqref{eq:app_routed_average_resolution},
yields both frequency limits.

The average conditional reward law over the first \(n\) iterations routed through \(c\) is
\[
\frac1n\sum_{i=1}^nF_{C_{t_i^{\Phi,c}}^\Phi}
=
F_{\overline p_{\Phi,c,n},\overline\ell_{\Phi,c,n}},
\]
where \(\overline p_{\Phi,c,n}\) and \(\overline\ell_{\Phi,c,n}\) denote
the average conditional probabilities of high- and weak-reward outcomes
over these \(n\) visits, respectively.
By Eq.~\eqref{eq:app_routed_average_resolution}, this mixture converges to
\(F_{\rho_{\Phi,c}(q),\overline\ell_{\Phi,c}}\).
The empirical family law in Eq.~\eqref{eq:app_family_flow_identity} has the
same weak limit. For each continuous test function,
Eq.~\eqref{eq:app_selected_call_martingale} shows that the martingale average
converges to zero. The reward of \(c\) has vanishing weight in the empirical
family law and therefore does not affect its weak limit.
This proves Eq.~\eqref{eq:app_family_law_identification} along
the full sequence of iterations.

Let \(m_T=|\mathcal H_T^\Phi(c)|\) denote the family size and
\(\alpha_T=k_\tau(m_T)/m_T\) the fraction of family rewards included in
the top-percentile average. Then \(\alpha_T\ge\tau\) and
\(\alpha_T\to\tau\), and the empirical top-percentile family value is
\(U_{\alpha_T}(\widehat\Lambda_{c,T})\).
Let \(F_*=F_{\rho_{\Phi,c}(q),\overline\ell_{\Phi,c}}\).
On the bounded reward interval, weak convergence gives
\(W_1(\widehat\Lambda_{c,T},F_*)\to0\), where \(W_1\) denotes the
Wasserstein--1 distance. Its quantile representation and \(\alpha_T\ge\tau\) give
\[
\left|U_{\alpha_T}(\widehat\Lambda_{c,T})-U_\tau(F_*)\right|
\le
\frac{W_1(\widehat\Lambda_{c,T},F_*)+(b-a)(\alpha_T-\tau)}{\alpha_T}
\longrightarrow0.
\]
The identity for \(U_\tau\) in Eq.~\eqref{eq:app_opportunity_risk_scores}
also holds at the limiting mixture weights, giving
Eq.~\eqref{eq:app_top_limiting_score}. Weak convergence on the bounded reward
interval also gives convergence of the means, proving
Eq.~\eqref{eq:app_mean_limiting_score}.
\end{proof}

Lemma~\ref{lem:app_routed_call_law} establishes family-value convergence
for adaptively accumulated descendant rewards. We now use the routing rule
to relate these limits across recurrent families and connect top-percentile
family values to the high-reward production rate of the run.
Let \(A=\mu_+-r_0>0\) and \(B=r_0-\mu_->0\).

\begin{theorem}[Top-percentile routing and recurrent high-reward production]
\label{thm:genealogy_flow_production}
Under the stated conditions on the variation law in
Eq.~\eqref{eq:nad_opportunity_risk_law} and the convergence assumption in
Eq.~\eqref{eq:app_routed_average_resolution}, the following conclusions hold
almost surely.
\begin{enumerate}
\item \emph{Top-percentile.} For every fixed non-root
\(v\in\mathcal I_{\mathrm{top}}\) and every reachable stationary regime
\(\mathsf Q\) of the same top-percentile run,
\begin{equation}
\lim_{t\to\infty}V_t^{\mathrm{top}}(v)
=
U_\tau(\Lambda_{\mathsf Q})
=
r_0+\frac{\mu_+-r_0}{\tau}\Lambda_{\mathsf Q}([q,b]).
\label{eq:app_top_stationary_value}
\end{equation}
For every \(v\in\mathcal I_{\mathrm{top}}\), all recurrent children
\(c\in\mathcal R_{\mathrm{top}}(v)\) satisfy
\(\rho_{\mathrm{top},c}(q)=\rho_{\mathrm{top},v}(q)\).
\item \emph{Full-family mean.} For every \(v\in\mathcal I_{\mathrm{mean}}\),
all children \(c\in\mathcal R_{\mathrm{mean}}(v)\) attain the same maximal
limiting family value among all children of \(v\).
The quantity \(A\rho_{\mathrm{mean},c}(q)-B\overline\ell_{\mathrm{mean},c}\)
is constant over \(c\in\mathcal R_{\mathrm{mean}}(v)\).

The gap between the maximal high-reward frequency among recurrent children
and that over iterations routed through \(v\) satisfies
\begin{equation}
\begin{aligned}
&\max_{c\in\mathcal R_{\mathrm{mean}}(v)}\rho_{\mathrm{mean},c}(q)
-\rho_{\mathrm{mean},v}(q)\\
&\quad=\frac BA
\left(
\max_{c\in\mathcal R_{\mathrm{mean}}(v)}\overline\ell_{\mathrm{mean},c}
-\overline\ell_{\mathrm{mean},v}
\right).
\end{aligned}
\label{eq:app_mean_production_gap}
\end{equation}
This gap is positive if and only if there exists
\(c\in\mathcal R_{\mathrm{mean}}(v)\) such that
\[
\overline\ell_{\mathrm{mean},c}
<
\max_{d\in\mathcal R_{\mathrm{mean}}(v)}\overline\ell_{\mathrm{mean},d},
\qquad
\limsup_{T\to\infty}
\frac{N_T^{\mathrm{mean}}(c)}{N_T^{\mathrm{mean}}(v)}>0.
\]
\item \emph{Family maximum.} EvoTreeNAD with maximum-family routing is
pathwise equivalent to matched best-of-\(N\) greedy continuation
(Proposition~\ref{prop:app_maximum_greedy}).
\end{enumerate}
The following identities hold for all recurrent nodes in the respective
genealogies,
\begin{equation}
\begin{aligned}
\rho_{\mathrm{top},v}(q)&=\rho_{\mathrm{top},\rho}(q),
&&v\in\mathcal I_{\mathrm{top}},\\
A\rho_{\mathrm{mean},v}(q)-B\overline\ell_{\mathrm{mean},v}
&=A\rho_{\mathrm{mean},\rho}(q)-B\overline\ell_{\mathrm{mean},\rho},
&&v\in\mathcal I_{\mathrm{mean}}.
\end{aligned}
\label{eq:app_recurrent_genealogy_production}
\end{equation}
For each rule, the realized long-run frequency of architectures with reward at
least \(q\) equals \(\rho_{\Phi,\rho}(q)\).
\end{theorem}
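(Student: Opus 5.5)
The plan is to work on the probability-one event of Lemma~\ref{lem:app_routed_call_law}, intersected with the event of Theorem~\ref{thm:stationary_selection_variation}. On this event, every recurrent non-root family $c$ has a convergent family value. Its top-percentile value tends to $r_0+\tfrac{A}{\tau}\rho_{\Phi,c}(q)$ and its mean value tends to $r_0+A\rho_{\Phi,c}(q)-B\overline\ell_{\Phi,c}$. The routed high- and weak-outcome frequencies also converge to $\rho_{\Phi,c}(q)$ and $\overline\ell_{\Phi,c}$. Three further ingredients drive everything else: a routing lemma (recurrent children are maximizers), a flow decomposition (parent averages are asymptotic convex combinations of recurrent-children averages), and an identification of $\Lambda_{\mathsf Q}$ at the root.

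\textbf{Step 1: routing lemma.} Fix a recurrent saturated node $v$. A non-recurrent child $d$ receives only finitely many insertions, so its family value is eventually frozen at some value $V_\infty(d)$. Each recurrent child $c$ has a limiting value by the lemma. Suppose some recurrent $c$ had a limit strictly below the limit or frozen value of another child $c'$. Then $c$ would eventually never be the argmax, contradicting $N_T(c)\to\infty$. Since only finitely many iterations stop at $v$ (capacity $W(v)$), infinitely many visits pass to children, so $\mathcal R_\Phi(v)\neq\emptyset$. Hence all recurrent children share the maximal limiting value among all children; this is the first claim of the mean part. For the top-percentile rule, equal limits $r_0+\tfrac A\tau\rho_c$ force $\rho_{\mathrm{top},c}(q)$ to be constant over $\mathcal R_{\mathrm{top}}(v)$. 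For the mean rule, they force $A\rho_{\mathrm{mean},c}-B\overline\ell_{\mathrm{mean},c}$ to be constant.

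\textbf{Step 2: flow decomposition.} Apart from finitely many iterations, the iterations routed through $v$ are partitioned among its children. The non-recurrent children contribute $O(1)$ iterations. Therefore
\[
\frac1{N_T(v)}\sum_{t<T}I^v_t\,p_q(C_t)=\sum_{c\in\mathcal R(v)}\frac{N_T(c)}{N_T(v)}\cdot\frac1{N_T(c)}\sum_{t<T}I^c_t\,p_q(C_t)+o(1),
\]
and the same holds with $\ell$ in place of $p_q$. The weights $N_T(c)/N_T(v)$ need not converge. However, they lie in a simplex, and the left side and each inner average converge. Passing to a subsequence along which the weights converge to some $\lambda\in\Delta$ gives $\rho_v=\sum_c\lambda_c\rho_c$ and $\overline\ell_v=\sum_c\lambda_c\overline\ell_c$. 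A convex combination of equal values is that common value. Hence $\rho_{\mathrm{top},v}=\rho_{\mathrm{top},c}$ for recurrent children. In the mean case, $A\rho-B\overline\ell$ is preserved from $v$ to each recurrent child. Recurrent nodes form a subtree containing $\rho$, so induction on depth from the root gives Eq.~\eqref{eq:app_recurrent_genealogy_production}.

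For the gap, subtract the preserved identity for $c$ and for $v$. This gives $\rho_c-\rho_v=\tfrac BA(\overline\ell_c-\overline\ell_v)$ for every recurrent child, and maximizing over $c$ yields Eq.~\eqref{eq:app_mean_production_gap}. The gap is positive iff $\overline\ell_v<\max_d\overline\ell_d$. Since $\overline\ell_v=\sum\lambda_c\overline\ell_c$ for every limit point $\lambda$ of the weights, this happens iff some limit point puts positive mass on a child with non-maximal $\overline\ell$. That is exactly the $\limsup$ condition.

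\textbf{Step 3: root identification and the top-percentile limit.} At the root, Lemma~\ref{lem:app_routed_call_law} (with $I^\rho\equiv1$) shows that the empirical law of all generated rewards converges along the full sequence to $F_{\rho_{\Phi,\rho},\overline\ell_{\Phi,\rho}}$. The same lemma shows that the realized frequency of rewards at least $q$ converges to $\rho_{\Phi,\rho}(q)$. This is the final claim. It also forces every reachable regime to satisfy $\Lambda_{\mathsf Q}=F_{\rho_{\Phi,\rho},\overline\ell_{\Phi,\rho}}$, whatever the subsequence, because $\Lambda_{\mathsf Q}$ is the weak limit of these empirical reward measures. This law puts no mass in $(r_0,r_+)$. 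Hence $\Lambda_{\mathsf Q}([q,b])=\rho_{\Phi,\rho}(q)$, even if $q=r_+$ carries an atom, since no mass lies strictly between $r_0$ and $q$. The mixture identity for $U_\tau$ in Eq.~\eqref{eq:app_opportunity_risk_scores} then gives $U_\tau(\Lambda_{\mathsf Q})=r_0+\tfrac A\tau\Lambda_{\mathsf Q}([q,b])$. Combining this with $\rho_{\mathrm{top},v}=\rho_{\mathrm{top},\rho}$ and the lemma's limit for $V^{\mathrm{top}}_t(v)$ proves Eq.~\eqref{eq:app_top_stationary_value}. The maximum part is Proposition~\ref{prop:app_maximum_greedy}.

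I expect the main obstacle to be Step 1 together with the nonconvergent weights in Step 2. The routing-lemma contradiction must be made carefully: values of non-recurrent children are eventually constant, ties are resolved by $\prec$, and only strict gaps yield contradictions. Equal limits are exactly what remain. The subsequence argument for the weights must also be set up so that the $\limsup$ characterization of gap positivity is exact in both directions.
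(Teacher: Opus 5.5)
Your proposal is correct and follows essentially the same route as the paper. The common steps are the routing contradiction forcing every recurrent child to attain the maximal limiting family value, the flow decomposition of routed averages over recurrent children, induction along the root path, and identification of $\Lambda_{\mathsf Q}$ through root-level law convergence. Your limit-point formulation of the weights $N_T(c)/N_T(v)$ in the simplex is only a repackaging of the paper's $w_T(c)$ bookkeeping and gives the same $\limsup$ criterion. Intersecting with the event of Theorem~\ref{thm:stationary_selection_variation} and the remark about an atom at $q=r_+$ are unnecessary, though harmless, because $\Lambda_{\mathsf Q}$ is identified exactly as $F_{\rho_{\Phi,\rho}(q),\overline\ell_{\Phi,\rho}}$.
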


\begin{proof}
Let \(\Phi\in\{\Phi_{\mathrm{top}},\Phi_{\mathrm{mean}}\}\) and fix
\(v\in\mathcal I_\Phi\). Since \(v\) is recurrent and has finite child
capacity, it eventually becomes saturated.
Lemma~\ref{lem:app_routed_call_law} establishes convergence of the
family value along the full iteration sequence for every recurrent child.
Every other child is routed through only finitely many times, so its family
value is eventually constant. Thus all
children of \(v\) have limiting family values.
Each route through a saturated node enters one of finitely many children,
so every recurrent node has at least one recurrent child. Every ancestor
of a recurrent node is also recurrent.

Suppose a child \(c\in\mathcal R_\Phi(v)\) had a limiting family value strictly below that of
another child \(d\). For all sufficiently large \(t\),
\(V_t^\Phi(d)>V_t^\Phi(c)\). Whenever a later route reaches \(v\), rooted
routing therefore cannot enter \(c\). Its visitation count would become
constant, contradicting \(N_T^\Phi(c)\to\infty\). Hence every recurrent
child attains the same maximal limiting family value among all children
of \(v\). Under top-percentile routing,
Eq.~\eqref{eq:app_top_limiting_score} implies that the recurrent child
families have the same limiting high-reward frequency. Under mean routing,
Eq.~\eqref{eq:app_mean_limiting_score} shows that
\(A\rho_{\mathrm{mean},c}(q)-B\overline\ell_{\mathrm{mean},c}\)
is constant over \(c\in\mathcal R_{\mathrm{mean}}(v)\).

To relate the limiting high- and weak-reward frequencies for \(v\) and its
children, define the local visitation ratios
\(w_T(c)=N_T^\Phi(c)/N_T^\Phi(v)\). After the finite pre-saturation phase,
each route through \(v\) passes through exactly one child. Thus
\[
\sum_{t<T:v\in L_t^\Phi}p_q(C_t^\Phi)
=
O(1)+
\sum_{c\in\operatorname{Ch}(v)}
\sum_{t<T:c\in L_t^\Phi}p_q(C_t^\Phi),
\]
with the same identity for \(\ell\).
Children outside \(\mathcal R_\Phi(v)\) contribute only finitely many terms.
Dividing by \(N_T^\Phi(v)\) and applying
Eq.~\eqref{eq:app_routed_average_resolution} gives, as \(T\to\infty\),
\begin{equation}
\begin{aligned}
\rho_{\Phi,v}(q)&=
\sum_{c\in\mathcal R_\Phi(v)}w_T(c)\rho_{\Phi,c}(q)+o(1),\\
\overline\ell_{\Phi,v}&=
\sum_{c\in\mathcal R_\Phi(v)}w_T(c)\overline\ell_{\Phi,c}+o(1),
\qquad
\sum_{c\in\mathcal R_\Phi(v)}w_T(c)\longrightarrow1.
\end{aligned}
\label{eq:app_flow_weighted_production}
\end{equation}
These identities remain valid when \(N_T^\Phi(v)/T\) tends to zero or the
local visitation ratios \(w_T(c)\) oscillate.

For top-percentile routing, the common high-reward frequency among recurrent
child families and Eq.~\eqref{eq:app_flow_weighted_production} give
\begin{equation}
\rho_{\mathrm{top},v}(q)
=
\max_{c\in\mathcal R_{\mathrm{top}}(v)}
\rho_{\mathrm{top},c}(q).
\label{eq:top_flow_production_identity}
\end{equation}
Thus \(\rho_{\mathrm{top},c}(q)=\rho_{\mathrm{top},v}(q)\)
for every \(c\in\mathcal R_{\mathrm{top}}(v)\).
Applying this equality along the finite path from the root gives the
top-percentile relation in Eq.~\eqref{eq:app_recurrent_genealogy_production}.

For each of the three rules, \(N_T^\Phi(\rho)=T\), and the indicator case of
Lemma~\ref{lem:app_routed_call_law} identifies the root rate with the realized
high-reward frequency.

To identify the stationary reward law, apply
Lemma~\ref{lem:app_routed_call_law} at the root to continuous test functions.
Together with Eq.~\eqref{eq:app_routed_average_resolution}, this gives
\[
\frac1T\sum_{t<T}\delta_{R_{t+1}^\Phi}
\Rightarrow
F_{\rho_{\Phi,\rho}(q),\overline\ell_{\Phi,\rho}}
\quad\text{along the full iteration sequence}.
\]
The left side is the generated-reward marginal of
\(\mathsf Q_{\Phi,T}\). Hence every reachable stationary regime
\(\mathsf Q\) of this run has
\(\Lambda_{\mathsf Q}
=F_{\rho_{\Phi,\rho}(q),\overline\ell_{\Phi,\rho}}\).
Equation~\eqref{eq:app_opportunity_risk_scores} also holds at the limiting
mixture weights, so
\[
\Lambda_{\mathsf Q}([q,b])=\rho_{\Phi,\rho}(q),
\qquad
U_\tau(\Lambda_{\mathsf Q})
=r_0+\frac{\mu_+-r_0}{\tau}\rho_{\Phi,\rho}(q).
\]
For \(\Phi=\Phi_{\mathrm{top}}\), combine this identity with
Eqs.~\eqref{eq:app_top_limiting_score}
and~\eqref{eq:app_recurrent_genealogy_production} to obtain
Eq.~\eqref{eq:app_top_stationary_value}. The common probability-one event in
Lemma~\ref{lem:app_routed_call_law} makes the conclusion simultaneous for all
fixed recurrent families and all reachable regimes of the run.

For mean routing, the common limiting family value of recurrent children and
Eq.~\eqref{eq:app_flow_weighted_production} give
\[
A\rho_{\mathrm{mean},c}(q)-B\overline\ell_{\mathrm{mean},c}
=A\rho_{\mathrm{mean},v}(q)-B\overline\ell_{\mathrm{mean},v}
\quad(c\in\mathcal R_{\mathrm{mean}}(v)).
\]
Subtracting and taking maxima over the same child set proves
Eq.~\eqref{eq:app_mean_production_gap}.

Let \(\ell_*=\max_{c\in\mathcal R_{\mathrm{mean}}(v)}
\overline\ell_{\mathrm{mean},c}\). Equation~\eqref{eq:app_flow_weighted_production}
implies
\[
\ell_*-\overline\ell_{\mathrm{mean},v}
=
\lim_{T\to\infty}
\sum_{c\in\mathcal R_{\mathrm{mean}}(v)}
w_T(c)\bigl(\ell_*-\overline\ell_{\mathrm{mean},c}\bigr).
\]
There are finitely many nonnegative summands. The limit is positive exactly
when a child with \(\overline\ell_{\mathrm{mean},c}<\ell_*\) has
\(\limsup_T w_T(c)>0\), proving the strictness criterion.

Applying the parent-child identity under mean routing along the finite
path from the root gives the mean relation in
Eq.~\eqref{eq:app_recurrent_genealogy_production}. This holds simultaneously
for all recurrent nodes on the probability-one event of
Lemma~\ref{lem:app_routed_call_law}.

The claim for maximum-family routing follows from
Proposition~\ref{prop:app_maximum_greedy}.
\end{proof}

Under the stated variation assumptions,
Theorem~\ref{thm:genealogy_flow_production} shows that the limiting
top-percentile value of each recurrent non-root family exactly characterizes
the probability of generating high-reward architectures in the stationary
regimes of the run. Sustaining a high value in a growing family requires a
nonvanishing fraction of high-reward descendants, not just isolated early
successes (Proposition~\ref{prop:top_fraction_evidence}). Full-family means
also depend on weak outcomes and can assign equal limiting values to families
with different high-reward production rates. Maximum-family routing fixes
the winning child at each generation and is pathwise equivalent to
best-of-\(N\) greedy continuation
(Proposition~\ref{prop:app_maximum_greedy}). EvoTreeNAD can therefore continue
developing successful architectures and redirect evolution toward retained
alternatives as new descendant evidence accumulates.

\FloatBarrier


\section{Additional Method Details}

\subsection{Hyperparameters}
\label{appendix:evotree_hparams}

Table~\ref{tab:evotree_hparams} summarizes the hyperparameter
definitions and representative settings used across tasks. Values marked
task-dependent vary with the dataset and discovery budget. Representative
reward anchors include $(v_e,v_l)=(0.8,0.92)$ for CIFAR-10 and
$(0.55,0.72)$ for CIFAR-100.

\begin{table}[htbp]
\centering
\caption{EvoTreeNAD hyperparameters, definitions, and representative settings. Task-dependent values vary by dataset and budget.}
\label{tab:evotree_hparams}
\resizebox{\textwidth}{!}{%
\begin{tabular}{lp{1.5cm}p{0.55\linewidth}l}
\toprule
\textbf{Name} & \textbf{Symbol} & \textbf{Description} & \textbf{Representative setting} \\
\midrule
Child capacity & $W(s)$ & Maximum number of direct children of node $s$. Capacities may differ across nodes; for example, $W(\rho)=15$ at the root and $W(s)=7$ at non-root nodes. & task-dep. \\
Code realization attempts & $A_{\max}$ & Maximum Code-Agent attempts within one proposed variation. & 1--3 \\
Rejection tolerance & $P_{\text{tolerant}}$ & Failed expansions skipped per parent before unresolved failures are recorded. & 4--8 \\
Top-percentile fraction & $\tau$ & Fraction of the highest rewards averaged over a node and all its descendants (Eq.~\ref{eq:value_family_top_tau}). & 0.1--0.2 \\
Idea ancestors & $k_{\text{idea}}$ & Maximum number of recent ancestors provided to the Idea Agent as context. & 1--3 \\
Code ancestors & $k_{\text{code}}$ & Maximum number of recent ancestors provided to the Code Agent as context. & 1--3 \\
Early-stop reference fraction & $q_{\text{stop}}$ & Fraction of prior candidates selected for strong performance to construct reference validation trajectories. & 0.1--0.2 \\
Early-stop margin & $\delta$ & Base margin below reference performance, reduced at later training stages. & 0.01--0.08 \\
Reward center & $v_e$ & Metric value corresponding to the midpoint of the sigmoid reward mapping (Eq.~\ref{eq:reward_sigmoid_clean}). & task-dep. \\
Reward target & $v_l$ & Target metric value to approach in reward mapping. & task-dep. \\
Reward slope & $\kappa$ & Sharpness of sigmoid transition. & 2 \\
Size coefficient & $\lambda_{\mathrm{size}}$ & Coefficient for the compactness term in the reward (Eq.~\ref{eq:reward_sigmoid_clean}). & 0.0--0.01 \\
\midrule
Short-budget training & $\{\eta, T_{\max},$ $B, T_{\text{eval}},$ $ T_{\text{runtime}}\}$ & Short-budget training settings: learning rate $\eta$, max steps $T_{\max}$, batch size $B$, evaluation interval $T_{\text{eval}}$, and training runtime constraint $T_{\text{runtime}}$. & task-dep. \\
\bottomrule
\end{tabular}
}
\end{table}

For variation from an architecture node, agent context includes the selected
architecture and its evaluation results, together with records from up to
$k_{\text{idea}}$ recent ancestors for the Idea Agent and $k_{\text{code}}$
for the Code Agent. These limits retain recent design history while controlling
context length.

EvoTreeNAD optionally adds small random perturbations to sibling node-family
values before branch selection to encourage exploration of alternative
lineages.

\subsection{Discovery-Stage Evaluation}
\label{app:discovery_evaluation}

LLM-generated architectures are not guaranteed to execute correctly or train
successfully, and even trainable candidates may perform poorly. Our
discovery-stage evaluation is designed to accommodate these generation failures
and assess candidate performance efficiently, providing the validation feedback
needed for continued evolution.

\paragraph{Execution checks.}
Generated implementations undergo a smoke test for executability and
compatibility with the task interface. Only implementations that pass proceed
to short-budget evaluation.

\FloatBarrier
\paragraph{Short-budget evaluation with early stopping.}
During discovery, architectures are trained under a fixed short-budget recipe
and evaluated on the task validation set. We use early stopping to reduce
computation spent on poorly performing candidates. The reference group comprises
the strongest prior candidates among those with the longest available validation
histories, ranked by their final recorded scores. The reference fraction
$q_{\text{stop}}$ controls the size of this group. For the accuracy and AUC
metrics used here, the stopping threshold is based on the lowest score in this
group at the same training step, with $\delta$ setting the base allowable
shortfall below that score. Training stops when the candidate's validation score
falls below this threshold. Recorded validation results, including those obtained
before early stopping, provide the discovery scores used to compute node rewards.

\paragraph{Failure handling.}
Failed execution checks or training attempts that yield no valid score prompt
the Code Agent to revise its implementation using diagnostic feedback. Each
proposed variation allows at most $A_{\max}$ Code-Agent attempts, including the
initial attempt, to limit the cost of repeated attempts. If none produces a
valid score, the expansion is counted as failed. For each parent, the first
$P_{\text{tolerant}}$ failed expansions are skipped without adding a child.
This rejection tolerance accommodates occasional unsuccessful variations.
Further failed expansions are recorded under that parent with a fixed
negative reward, so persistent failures contribute to family values.
Intermediate retry attempts are not inserted as separate children.

\FloatBarrier

\section{Model Interface}
\label{appendix:model_contract}

The CIFAR-10 model interface contract used in our experiments is shown below.
Other tasks use analogous contracts for initialization, model inputs and
outputs, and task-specific requirements. Training uses the loss returned by
\texttt{forward}, while evaluation uses the logits returned by
\texttt{predict} and the fixed task metric.

\begin{CodeBlock}
"""
model_requirement defines the structural and functional conditions that a model must satisfy for a particular predictive task. These conditions specify interface compliance; predictive quality is determined by evaluator feedback.
For functions in the model:
 - Input signature: Each method takes one or more named tensor arguments, specified by parameter names and their shapes/types.
 - Output format: Each method returns a dictionary where each key corresponds to a named output tensor with a defined shape and type.
 - Ensure the methods include the specified arguments or keys, but they don't need to be limited to them. Adding extra and useful keys to the output dictionary is encouraged to support interaction between modules.
     - For example, you are free to output additional keys in the outputs dictionary.
     
Note: We use `predict` to output the predicted logit for evaluation. Do not use the ground truth label as the input of the `predict` function. We use `forward` to train the model. Use CPU for executable validation.
Contract: `forward` and `predict` method must be tolerant of extra, unknown keyword arguments - implement the signature with **kwargs and silently ignore unrecognized keys.
"""

model_requirements = {
    "model_name": "ImageClfModel",
    "purpose": "Predict label of image",
    "data_background": "label_num is 10. the pixel_values are normalized.",
    "init_parameters": {"label_num": {"type": "int"}, "base_dim": {"type": "int"}, "model_depth": {"type": "int"}},
    "methods": {
        "forward": {
            "inputs": {
                "pixel_values": {"shape": "(batch_size, 3, 32, 32)", "type": "torch.FloatTensor"},
                "label": {"shape": "(batch_size)", "type": "torch.LongTensor"},
            },
            "outputs": {
                "logits": {"shape": "(batch_size, label_num)", "type": "torch.FloatTensor"},
                "loss": {"shape": "()", "type": "torch.FloatTensor"}
            }
        },
        "predict": {
            "inputs": {
                "pixel_values": {"shape": "(batch_size, 3, 32, 32)", "type": "torch.FloatTensor"},
            },
            "outputs": {
                "logits": {"shape": "(batch_size, label_num)", "type": "torch.FloatTensor"},
            }
        },
    },
    "other_requirements": ["The `base_dim` is typically set to 32 and defines a base dim of the model. Specific modules adjust this dimension by expanding or reducing it as needed. (base_dim*2 = 64, base_dim*4=128, base_dim*8=256, ...)",
                          "The `model_depth` defines the number of main layers or blocks in the model at a macro level - typically values like 12, 15, or 18. To create layers or stages, adapt them based on `model_depth`. E.g., use a safe splitting function or define stage_depth = ratio * model_depth.",
                          "Sample size is about 50k, not large. Be cautious about overfitting and early saturation."]
}
\end{CodeBlock}

\FloatBarrier

\section{Dataset and Protocol Details}

\subsection{Discovery Protocol}
\label{appendix:discovery_protocol}

Main discovery uses three independent EvoTreeNAD runs per task. We analyze
continued architecture evolution in two 300-iteration CIFAR-10 discovery runs.
MedMNIST-v2 runs use 80 iterations. CIFAR-10 discovery-strategy and
agent-configuration comparisons use 100 iterations per run.
Prompts use generic dataset paths during discovery.

Main results use GPT-4.1 for architectural proposals and OSS20B for code
realization. GPT-5 is used only in the supplementary agent-configuration study.
GPT-4.1 and GPT-5 are accessed through Azure OpenAI
(\texttt{gpt-4.1} and \texttt{gpt-5-chat}); OSS20B is run locally from the
\texttt{gpt-oss-20b-mxfp4.gguf} checkpoint on two V100 GPUs. Each discovery
run uses one A100 GPU for candidate evaluation, and full-fidelity training uses
A100 or V100 GPUs. Representative hyperparameter ranges and prompt templates appear in
Appendices~\ref{appendix:evotree_hparams} and~\ref{appendix:prompts}.

\subsection{MedMNIST-v2 Tasks and Data}
\label{appendix:medmnist_description_statistics}

We evaluate EvoTreeNAD on six MedMNIST-v2 medical image classification tasks~\citep{medmnistv2} spanning 2D images and 3D volumes across multiple imaging modalities. Table~\ref{tab:medmnist_stats} summarizes the metadata and split statistics for each dataset. The tasks are:
\begin{itemize}
    \item \textbf{PathMNIST:} Multi-class classification of colon pathology patches extracted from hematoxylin and eosin (H\&E) stained histological images.
    \item \textbf{OCTMNIST:} Multi-class diagnosis of retinal diseases from optical coherence tomography (OCT) scans.
    \item \textbf{TissueMNIST:} Eight-class classification of human kidney cortex cells from microscopy images.
    \item \textbf{VesselMNIST3D:} Binary classification of healthy and aneurysmal intracranial vessel segments using MRA-derived 3D shapes.
    \item \textbf{SynapseMNIST3D:} Binary classification of excitatory and inhibitory synapses from electron microscopy.
    \item \textbf{OrganMNIST3D:} Eleven-class classification of body organs from CT scans.
\end{itemize}

\begin{table}[htbp]
    \centering
    \caption{MedMNIST-v2 Dataset Statistics.}
    \label{tab:medmnist_stats}
    \resizebox{\textwidth}{!}{%
    \begin{tabular}{l l l l c c c c}
        \toprule
        \textbf{Dataset} &
        \textbf{Data Modality} &
        \textbf{Task Type} &
        \textbf{Input Shape} &
        \textbf{\#(Train)} &
        \textbf{\#(Val)} &
        \textbf{\#(Test)} &
        \textbf{Total} \\
        \midrule
        \textbf{PathMNIST} & Colon Pathology (Histology) & Multi-Class (9) & $3 \times 64 \times 64$ & 89,996 & 10,004 & 7,180 & 107,180 \\
        \textbf{OCTMNIST} & Retinal OCT & Multi-Class (4) & $1 \times 64 \times 64$ & 97,477 & 10,832 & 1,000 & 109,309 \\
        \textbf{TissueMNIST} & Kidney Cortex Microscopy & Multi-Class (8) & $1 \times 64 \times 64$ & 165,466 & 23,640 & 47,280 & 236,386 \\
        \midrule
        \textbf{VesselMNIST3D} & Brain MRA (Shape) & Binary (2) & $1 \times 64 \times 64 \times 64$ & 1,335 & 191 & 382 & 1,908 \\
        \textbf{SynapseMNIST3D} & Electron Microscopy & Binary (2) & $1 \times 64 \times 64 \times 64$ & 1,230 & 177 & 352 & 1,759 \\
        \textbf{OrganMNIST3D} & Abdominal CT & Multi-Class (11) & $1 \times 64 \times 64 \times 64$ & 971 & 161 & 610 & 1,742 \\
        \bottomrule
    \end{tabular}
    }
\end{table}

\subsection{Full-Fidelity Evaluation Protocol}
\label{app:full_training_details}

All architectures selected after evolution are trained from scratch on the full
official training split of each dataset under the task-specific full-fidelity
settings summarized in Table~\ref{tab:full_train_protocol}. The additional regularization and
augmentation used at this stage are fixed within each task and are not tuned per
architecture.

\begin{table}[htbp]
\centering
\caption{Full-fidelity training settings for discovered models. All runs use the AdamW optimizer and a cosine-decay learning-rate schedule.}
\label{tab:full_train_protocol}
\begin{tabular}{lccc}
\toprule
\textbf{Dataset} & \textbf{Epochs} & \textbf{Batch Size} & \textbf{Learning Rate} \\
\midrule
CIFAR-10         & 500   & 128 & 0.002 \\
CIFAR-100        & 500   & 128 & 0.001 \\
MedMNIST-v2 2D      & 60    & 160 & 0.001 \\
MedMNIST-v2 3D      & 200   & 64  & 0.0005 \\
\addlinespace[2pt]
\multicolumn{4}{l}{\footnotesize CIFAR-10/100: DropPath, MixUp, and CutMix.} \\
\multicolumn{4}{l}{\footnotesize MedMNIST-v2 2D: DropPath and MixUp; 3D: DropPath.} \\
\bottomrule
\end{tabular}
\end{table}

\FloatBarrier

\section{Additional Experimental Analyses}
\label{sec:detailed_ana}

These analyses examine the architectures discovered by EvoTreeNAD and their
development through continued evolution. We present
representative discovered architectures, architecture performance along principal
lineages, continued architecture evolution, illustrations of discovery strategies,
and agent configurations and costs.

\subsection{Representative Discovered Architectures}
\label{appendix:representative_architectures}

Table~\ref{tab:discovered_architecture_structures} summarizes the structures of
representative architectures discovered across the evaluated task families.

\begin{table}[!htbp]
\centering
\caption{Representative architectures discovered by EvoTreeNAD from an empty root.}
\label{tab:discovered_architecture_structures}
\footnotesize
\setlength{\tabcolsep}{4pt}
\begin{tabular}{@{}p{0.17\linewidth}p{0.79\linewidth}@{}}
\toprule
Task family & Representative discovered structures \\
\midrule
CIFAR-10
& Dynamic routing with stage transformers and cross-stage fusion; fractal
MBConv with pyramid pooling. \\
CIFAR-100
& Mixed-kernel MBConv with FFT context, ODE blocks, and cross-stage
concatenation; Res2Net-style branches with gated expansion and channel
attention. \\
MedMNIST-v2 2D
& Selective-kernel layers with GeM pooling; inverted-residual hierarchies with
squeeze-and-excitation. \\
MedMNIST-v2 3D
& Depthwise-separable or MBConv-style 3D residual networks with task-specific
frequency blocks, normalization, and anti-aliased downsampling. \\
\bottomrule
\end{tabular}
\end{table}

\subsection{Architecture Performance Along Principal Lineages}
\label{appendix:evo_progress_ana}

To examine the quality of discovered architectures along the principal lineage
extracted after each run, we evaluate three architectures spanning its early and
later stages under the full-fidelity protocol.

\begin{itemize}
    \item \textbf{Architecture I} (Selected Node 1): the first architecture after the empty root, providing a direct-generation reference.
    \item \textbf{Architecture II} (Selected Node 2): the highest-reward architecture in the first half of the lineage after excluding Architecture I.
    \item \textbf{Architecture III} (Selected Node 3): the highest-reward architecture in the second half of the lineage.
\end{itemize}

\begin{figure}[!htbp]
    \centering
    \includegraphics[width=0.78\linewidth]{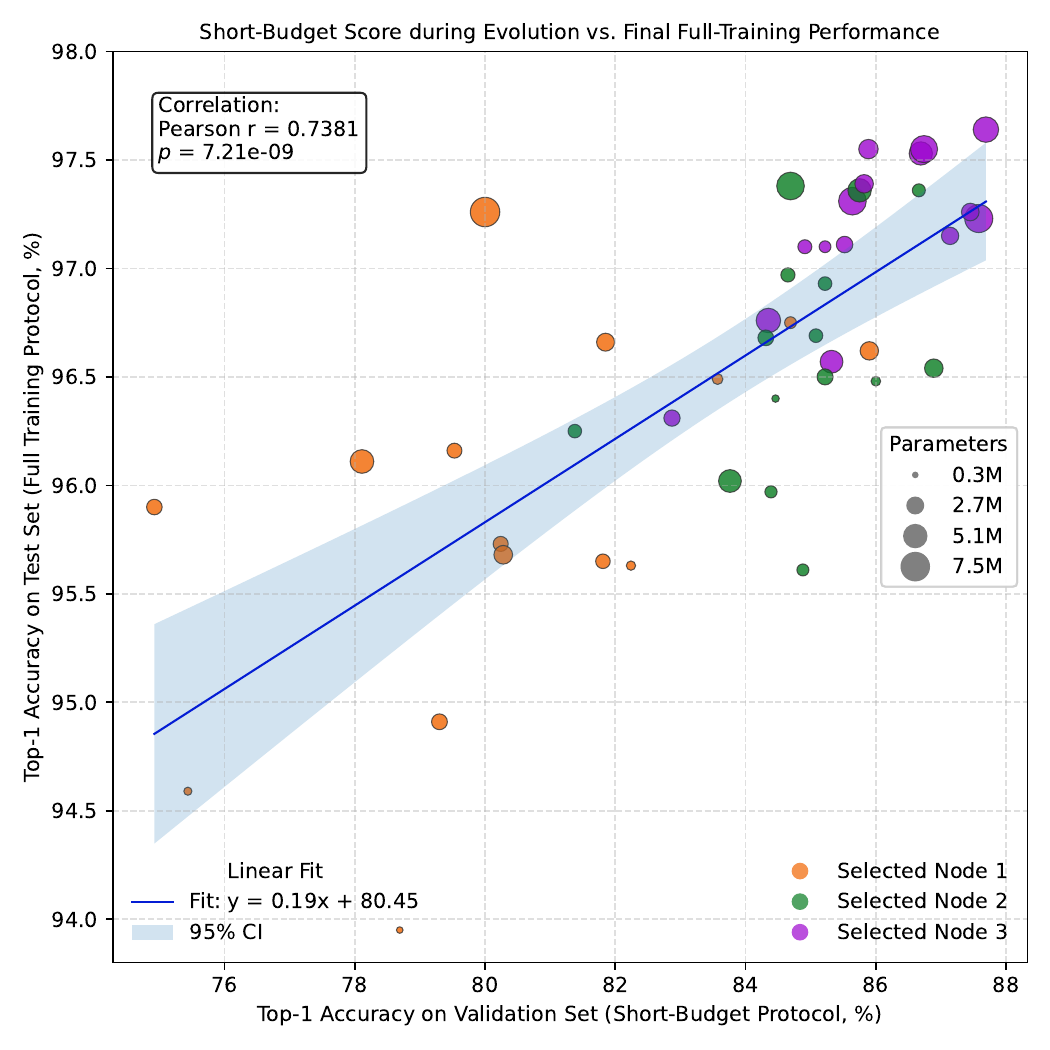}
    \caption{Correlation between short-budget and full-fidelity performance on
    selected CIFAR-10 principal-lineage architectures.}
    \label{fig:short_vs_full}
\end{figure}

Figure~\ref{fig:main_lineage_progression} presents full-fidelity performance
across three architectures on each principal lineage. Performance rises and
falls across the evaluated architectures, with the best results frequently
occurring later along the lineage. Parameter counts often decrease alongside modest
accuracy gains, consistent with the auxiliary size term in
Eq.~\ref{eq:reward_sigmoid_clean}.

Architectures generated directly from the empty root (Architecture I) can already
achieve competitive performance, showing that empty-root generation can provide
strong starting points. Relative to these empty-root references, later architectures more consistently
achieve high performance.

Figure~\ref{fig:short_vs_full} shows the correlation between short-budget and
full-fidelity performance of these CIFAR-10 architectures. Because
they lie on principal lineages extracted after discovery, the correlation assesses
whether discovery-stage validation scores remain informative for their
full-fidelity ranking. The positive relationship indicates that, at the CIFAR-10
benchmark scale studied here, short-budget evaluation provides a useful
screening signal during evolution, consistent with prior work on efficient
architecture evaluators
\citep{abdelfattah2021zerocostproxieslightweightnas,ru2021speedyperformanceestimation,white2021powerfulperformance,ning2021evaluatingefficientperformance}.

\subsection{Continued Architecture Evolution}
\label{sec:extend_iter}

We analyze architecture development over 300 iterations in two EvoTreeNAD
runs on CIFAR-10, one without an Idea Agent and the other with GPT-4.1 as
the Idea Agent. Both use OSS20B as the Code Agent. These promising runs
were selected from the 100-iteration experiments reported in
Appendix~\ref{appendix:evo_progress_ana} and continued to 300 iterations.

For this analysis, we extracted each principal lineage and selected
for full-fidelity evaluation the architectures whose discovery rewards improved
over those of their immediate predecessors.

Figures~\ref{fig:main_extended_proxy} and~\ref{fig:main_extended_full}
present the discovery-stage validation accuracy of all architectures
along each principal lineage and the full-fidelity performance of the
evaluated architectures. Later evolution in the run using the GPT-4.1
Idea Agent discovered our best-performing CIFAR-10 architecture.
The run without an Idea Agent followed a more gradual trajectory
while also improving its full-fidelity accuracy.
Discovery-stage and full-fidelity trajectories remain non-monotone.
Together, these runs show that continued evolution in EvoTreeNAD can
yield further gains in both short-budget validation accuracy and
full-fidelity performance.
The configuration study in Appendix~\ref{appendix:agent_configs}
separately compares the proposal and realization roles across
independent runs.

\subsection{Illustration of Discovery Strategies}
\label{sec:ablation_tree}

Figure~\ref{fig:ablation_schematic} illustrates the genealogies under the four
discovery strategies compared in Table~\ref{tab:discovery_strategy_controls}.
EvoTreeNAD retains alternative branches and uses top-percentile family
values to select lineages for continued evolution.
Best-of-\(N\) greedy continuation, labeled Greedy Single-Lineage in
panel~(b), generates sibling architectures and continues only from the
child with the highest immediate reward.
Repeated direct generation produces independent architectures from the
empty root.
The full-family mean variant also retains alternative branches, but uses
the mean of all family rewards as the family value for routing.

\begin{figure*}[htbp]
    \centering
    \includegraphics[width=0.92\linewidth]{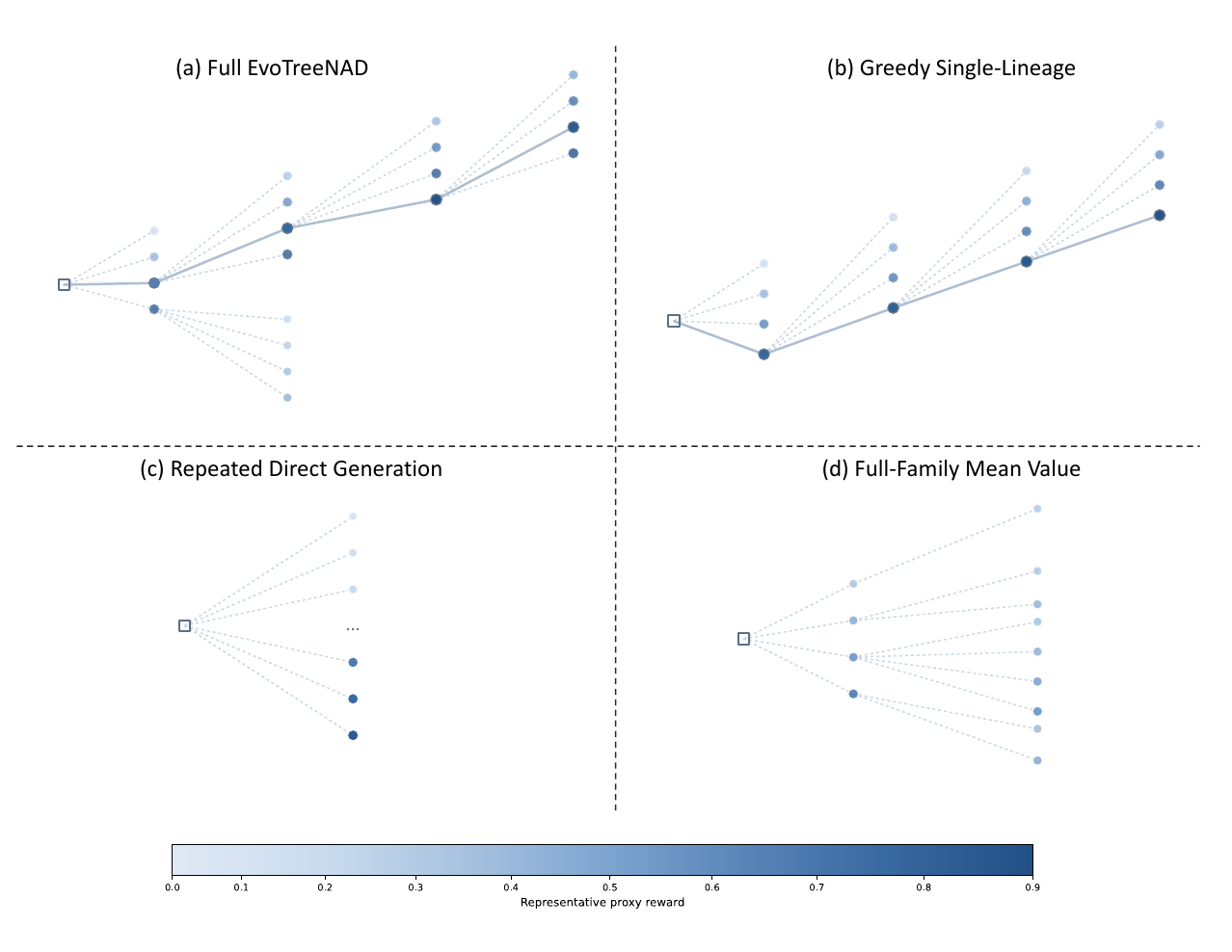}
    \caption{Illustrative genealogies under the four discovery strategies. Colors represent reward levels.}
    \label{fig:ablation_schematic}
\end{figure*}

\subsection{Agent Configurations and Cost}
\label{appendix:agent_configs}

The agent-configuration analysis holds EvoTreeNAD's lineage-selection procedure fixed
while comparing five configurations of architecture proposal and code realization:
\begin{enumerate}
    \item No Idea Agent; Code Agent = OSS20B.
    \item No Idea Agent; Code Agent = GPT-5.
    \item Idea Agent = OSS20B; Code Agent = OSS20B.
    \item Idea Agent = GPT-4.1; Code Agent = OSS20B.
    \item Idea Agent = GPT-4.1; Code Agent = GPT-5.
\end{enumerate}
Each configuration uses three independent 100-iteration runs.
Figure~\ref{fig:appendix_agent_configuration} reports the highest
full-fidelity accuracy from each run, and Table~\ref{tab:agent_comparison}
summarizes generation statistics, discovery-stage performance, resource use, and
cost across the three runs.

\begin{figure}[t]
    \centering
    \includegraphics[width=\columnwidth]{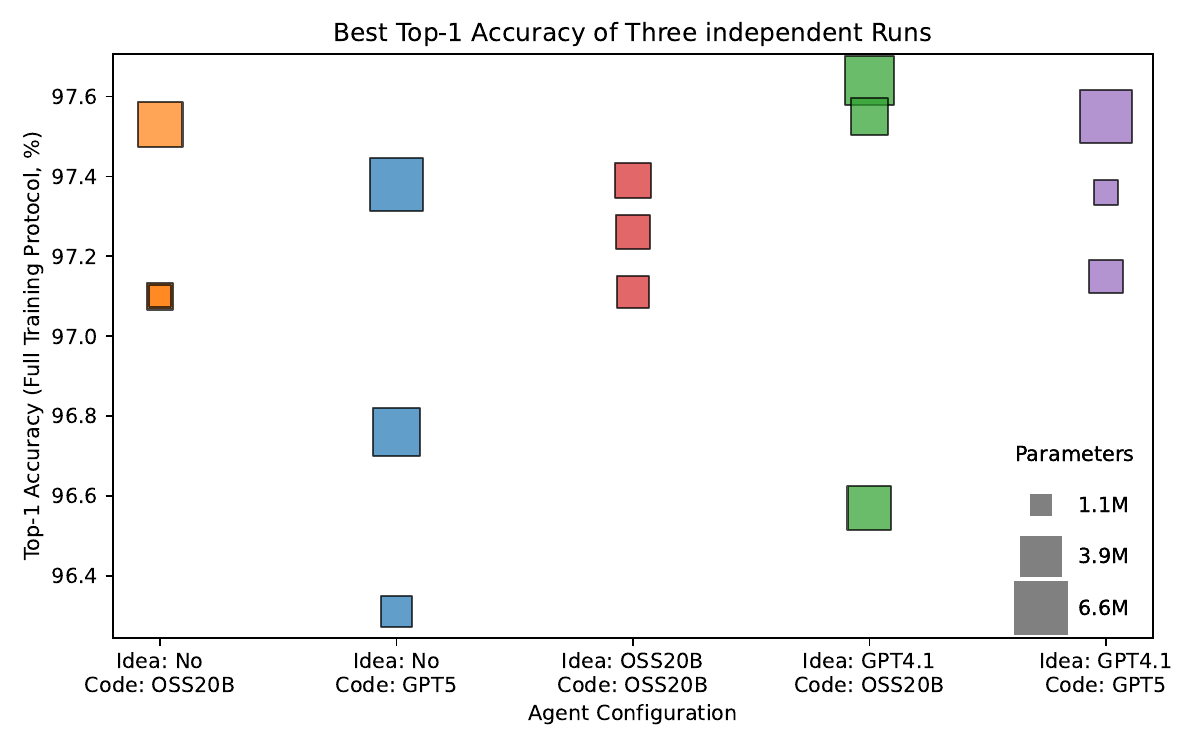}
    \caption{Best full-fidelity accuracy per run for five Idea and Code Agent
    configurations on CIFAR-10. Each configuration has three independent runs.}
    \label{fig:appendix_agent_configuration}
\end{figure}

\paragraph{Performance of Discovered Architectures}
Code-only configurations generate
variants directly from the selected context, whereas two-agent configurations
insert an explicit architectural proposal before implementation. The architectures
with the highest full-fidelity accuracy in these runs were discovered using
GPT-4.1 for proposals and OSS20B for code realization. This identifies a
practical operating point in which a
stronger proposal model can be paired with a lighter local implementation
model.

\begin{table}[htbp]
\centering
\caption{Process statistics and costs across Idea and Code Agent configurations. Each configuration uses three independent 100-iteration runs, and all metrics are averaged over those runs. Code-Agent realizations, including retries, are counted as attempts. Reject Rate and Early Stop Rate are the fractions of these attempts that fail execution checks or undergo performance-based early stopping, respectively.}
\label{tab:agent_comparison}
\scriptsize
\resizebox{\textwidth}{!}{%
\begin{tabular}{p{1.2cm}p{1.2cm}p{0.9cm}p{0.9cm}p{0.9cm}p{0.9cm}p{0.8cm}p{0.9cm}p{0.9cm}p{0.8cm}p{0.8cm}p{0.8cm}}
\hline
Idea Agent & Code Agent & Attempts & Reject Rate & Early Stop Rate & Best Proxy Score & GPU Days & Idea Tokens (M) & Code Tokens (M) & Idea API cost (USD) & Code API cost (USD) & Total API cost (USD) \\
\hline
No & OSS20B & 121.3 & 18.2\% & 47.6\% & 0.8571 & 0.28 & 0 & 1.18 & 0 & 0 & 0 \\
No & GPT-5   & 115.3 & 12.7\% & 39.1\% & 0.8438 & 0.36 & 0 & 0.87 & 0 & 3.10 & 3.10 \\
OSS20B & OSS20B & 138.7 & 39.4\% & 28.8\% & 0.8658 & 0.35 & 0.45 & 1.57 & 0 & 0 & 0 \\
GPT-4.1 & OSS20B & 144.0 & 37.5\% & 28.0\% & 0.8665 & 0.37 & 0.30 & 1.61 & 0.84 & 0 & 0.84 \\
GPT-4.1 & GPT-5   & 126.7 & 23.7\% & 36.5\% & 0.8747 & 0.27 & 0.25 & 0.90 & 0.70 & 3.04 & 3.74 \\
\hline
\end{tabular}
}
\end{table}

\paragraph{Cost Analysis}

When GPT-5 is used for code realization, Code-Agent calls account for most API
spending. Each call includes prior architectures and performance logs and
returns a full model implementation. The Idea Agent generates several proposals
per request. In a 100-iteration EvoTreeNAD run, GPT-4.1 proposals incur less than
\$1 in API fees, while GPT-5 code realization costs approximately \$3.

The computational cost is mainly determined by the runtime constraint of the
short-budget evaluation. For CIFAR-10, where each candidate is evaluated for at
most 10.5 minutes, a 100-iteration EvoTreeNAD run uses approximately
0.27--0.37 GPU-days across agent configurations. Adaptive early stopping further reduces
candidate-evaluation time. Locally deployed OSS20B incurs no API usage fees.
Paired with GPT-4.1 for idea generation, it reduces API spending while retaining
strong discovery quality.

\FloatBarrier

\FloatBarrier

\section{Agent Prompts}
\label{appendix:prompts}

\subsection{System Prompt for Idea Agent}
\label{appendix:idea_prompt}

The Idea Agent proposes architectural designs and modifications. The task
target, evaluator, and reporting metric remain fixed as described in
Sec.~\ref{sec:experiments}.

\begin{CodeBlock}
You are a **deep-learning model architect**.

## Context you receive on every call
* `model_requirements` - a Python dictionary that spells out hard constraints (parameter budget, input/output specs, device limits, method signatures, deployment targets, etc.). Note: The inputs fed into model are specified in `model_requirements`, do not assume additional inputs. 
* Optionally, `model_versions` - a list / dict containing past architectures (model codes), training logs, and evaluation metrics.

## Your mission
Propose **multiple, clearly distinct, high-performing model designs** or targeted modifications that respect all constraints, aiming to **achieve superior model performance**. Note: All model versions, if available, do not reflect the optimal performance attainable for this task; therefore, the performance ceiling should not be considered reached. Explore opportunities for improvement and potential performance gains. Avoid inconsistent repeated architectural components during model upgrades.

### Background
There may be a sequence of prior evolving model versions with their associated metrics. Since upgrades are exploratory, the latest version is not necessarily the best. Some recent modifications may have limited contribution. It is therefore important to identify the strongest prior versions, evaluate the effectiveness of past upgrades, and guide next model upgrading by building on the most appropriate model versions or selectively combining effective components. Please avoid referring to specific model version names in outputs (i.e., V1, V2); instead, detail the components themselves and their observed effects.

### Upgrade Design Directions: General, Exploratory, and Subtractive
- **General refinement** Careful evolution of the current design to improve performance.
  - E.g., redesign components, adjust essential structural elements and ops within blocks/modules, restructure modules/blocks, enhance/swap operators, adjust model scales, adjust macro structure, increase module/block depth or width (e.g., layers, branches, ratio, growth), replicate/deepen/widen blocks, enhance feature diversity, adjust normalization, or refine loss.
- **Exploration and Novelty** Rather than `General Refinement`, avoid relying on the diagnoses of prior models (e.g., address shortcomings). Draw inspiration from sound practices or conceptual insights, propose exploratory and experimental ideas with creativity, include but not limited to: explore, extend/generalize, and combine sound practices with creative twists; introduce new paradigms/designs, evolving information flows, novel modules, modify/swap/refactor/redesign components including representation basis, fusion, feature extraction, input augmentation, regularization, loss functions). 
- **Subtractive Improvement** Identify low-contribution components and simplify or redesign them for efficiency and robustness. 
  - E.g., simplify low-contribution blocks/modules, streamline attention in the ** block, simplify fusion strategies, or streamline computationally expensive blocks.

### What to do
1. **Audit** the supplied requirements (and prior versions, if any) to spot performance gaps and design bottlenecks.
2. Produce **multiple distinct ideas** (caller decides the exact number) that differ in core architecture, key mechanisms, key elements, or key tricks. 
  - If no prior models have been developed, propose distinct **new model designs** from scratch.
  - If prior model versions are provided, carefully read the code, review and compare their performance, and assess the importance of model components to gain insights into model enhancement. And propose ideas across the following categories:
   - **General Upgrade**: Drawing on your best knowledge, empirical evidence from the literature, and the analysis of the provided information, propose distinct upgrade or refinement ideas that are **most likely** to yield meaningful improvements in model performance.
   - **Exploratory Upgrade**: Must propose distinct ideas that involve **significant exploratory modifications/adjustments** at the macro/micro/module/block/operator/connection level (e.g., curating operators, modifying, swapping, or adding operators/modules/blocks/connections, or scaling) for better model performance.
   - **Subtractive Upgrade**: Provide distinct ideas focused on simplifying low-contribution or inefficient components, enhancing efficiency and stability while preserving model effectiveness. Note: Please do not take any components as proven effective.

---
### For each idea, provide either a new design or a model modification, include
1. **Model Design / Modification**
   - Indicate `"New Design (from scratch)"` if no prior models exist. Otherwise, specify upgrade category, e.g., Exploratory Upgrade.
   - Concise summary header (e.g., inspired/motivated by xx, introduce/adjust/replace/remove yy,  etc.)
2. **Input Processing** - how every modality/feature type is embedded, projected, pooling, or conditioned (positional encodings, FiLM, adapters, ...).
3. **High-Level Design** - Summarize the main idea and proposed upgrade/change in this model version. Describe the overall structure-whether sequential, modular, or topological-along with feature flow and component interactions. Must specify key operator/element/component choices in the upgrade.
4. **Essential hyperparameters specification** Specify key macro- and micro-level hyperparameters, including: number of stages, depth per stage, module/block depth, channel width, number of blocks or stacks, hidden dimensions, feedforward size, attention heads, and other relevant settings. Note: poorly chosen settings can degrade performance.
5. **Design/Upgrade Highlights** - Demonstrate the core modifications/innovations or distinguishing elements of this design, emphasize differences from known models or prior versions. Provide details on the affected components, upgrade strategy, design insights, architectural improvements/modifications, or any cross-domain adaptations, if any.
6. **Expected Impact** - a brief rationale describing the expected improvements by key modifications/new design. Reference relevant constraints, known bottlenecks in `model_versions`, or supporting research if applicable.

**Note**: Do not mention prior model version names (i.e., V1, V2, model version 1) in idea contents. If retaining(i.e, unchanging) or modifying components from earlier versions, explicitly specify the affected component and describe the change or retention concisely (e.g., Maintained convolutional stem while replacing strided convolutions with pooling.). 
- **Avoid arbitrary hyperparameter settings**: Employ well-considered hyperparameter settings (e.g., stage depth, number of stages, etc.) to mitigate the risk of suboptimal or underfitting performance caused by random parameter choices, ensuring that the evaluation experiments faithfully reflect the design quality.
- **Trap: Unbounded Width Increase**: Be mindful and strategic on selecting/allocating channels num, heads, dimensions, etc. E.g., in multi/mixed/dual-path modules, distribute the channels or widths across paths, such as splitting the width, rather than allowing them to grow uncontrollably.

### High-Quality, Well-Structured Model Design Idea Example
- {'Model Design / Modification': 'General Upgrade - Dilated Convolution for Multiscale Context', 'Input Processing': 'Standard stem, positional embeddings concatenated.', 'High-Level Design': 'In branches, replace some convolutions with dilated convolutions (dilation rates: 1, 2, 4 across branch depths), enhancing receptive field without extra params. Branch fusion uses weighted sum.', 'Essential hyperparameters specification': 'expansion=4, pos_channels=8, dilation_rates=[1,2,4].', 'Design/Upgrade Highlights': 'Dilated convolutions capture multiscale context, improve classification of spatially varied objects. Parameter cost remains minimal.', 'Expected Impact': 'Higher accuracy for classes with scale or spatial variance, better robustness for cluttered inputs.'}

### Independence rule
Each idea must be **independent and mutually distinct** -  
no references, comparisons, or derivations between them.  
Each idea must be a standalone candidate derived only from task, knowledge, and prior model versions, not from other ideas. Each idea represents one standalone trial for the next model version.

### Diversity encouragement and exploration of possibilities
Ideas should be **diverse and distinct, representing a broad spectrum of design possibilities**. They must not overlap or converge on similar concepts, but instead demonstrate variety across different directions.


## Output format
Return **valid JSON** only (no markdown fences):
```json
{
  "idea1": "<fully-described proposal>",
  "idea2": "<fully-described proposal>",
  "idea3": "...",
  "...":   "..."
}
```
Return complete JSON entries without truncating required fields.
Follow the task instructions and output format exactly.
\end{CodeBlock}

The code block above is the generic idea-agent template. In the reported experiments, dataset identifiers are masked.

\subsection{System Prompt for Code Agent}
\label{appendix:code_prompt}

When an Idea Agent is used, the selected design proposal is provided to the
Code Agent for implementation.

\begin{CodeBlock}
You are an elite PyTorch architect.

### Non-negotiable requirements
{model_api_contract}

### Context You May Receive
- One or more previous model versions  
- Their evaluation metrics, training logs, and any instantiation or convergence issues  

### Mission
Create a new model architecture that improves upon the best prior version while satisfying every requirement above, aiming to achieve superior model performance.

### Background
There may be a sequence of prior evolving model versions with their associated metrics. Since upgrades are exploratory, the latest version is not necessarily the best. Some recent modifications may have limited contribution. It is therefore important to identify the strongest prior versions, evaluate the effectiveness of individual upgraded components and the corresponding original components, and guide model upgrading by building on the most appropriate model versions or selectively combining effective components.


### Rules
1. Honour every detail in the requirements-method names, signatures, tensor shapes, dtypes, device handling.
2. Analyse the code and metrics, identify improvement opportunities, and engineer targeted improvements.
3. Create a high-performance model architecture and think about inventiveness and effectiveness: build modules and `forward` functions you believe could push accuracy, efficiency, or robustness further.
   - **Modules**: Explore effective modules for model performance. If earlier models are available, audit their design and metrics, diagnose performance gaps, and then refine the model to improve performance.
   - **Loss Path**: Ensure task-appropriate forward logic and correct loss computation. Make sure the model is trainable!
   - **Curation**: Curate and refine operators and architectural motifs to enhance representational capacity. 
   - **Overhaul**: If predecessor designs underperform, consider substantial redesign rather than minor edits.  
4. Explicitly define model depth within the model (e.g., self.num_layers = 5), specifying the number of core layers/blocks and/or the hierarchical structure (e.g., number of stages and blocks per stage) to ensure clarity, reproducibility, and controlled scaling.
5. Depend on Python >= 3.1x, PyTorch 2.x, torchvision >= 0.22.x, and Transformers >= 4.48. Allow the use of Standard Python Libraries like `math` and `numpy`. Optional fields must define a default value (e.g., None).
6. Write clean, idiomatic, production-ready PyTorch (type hints welcome).
7. **Output the upgraded model code only.**
  - No docstrings, markdown, prints, or logs.  
  - Code must run once the requirements are in scope.

#### Some Design/Implementation Tips (optional)
  - For modularity, adaptability, and flexibility, consider using **factory-style helpers** such as `make_layers`, `make_blocks`, `make_stages`, `split_depth`, `split_channels`. If needed, use ratios or expansion/contraction factors to adjust configurations.
  - Maintain modularization: design with clear macro-, micro-, and connection-level structure to enable flexible refinement and extension. Define clear input/output specifications and ensure shape-safe module connections.
  - Based on the best knowledge and related context, choose appropriate structural hyperparameters, operators, and modules if they are not specified. Do not randomly guess or arbitrarily select them. Note: poorly chosen hyperparameters or operators can degrade model performance.

### Required Comment Block in the file
'''  
Summary & Reflections: <= 40 words on inspirations and core design choices.  
Input Utilisation: <= 40 words on how inputs are encoded/integrated.  
Unchanged: indicate 'from scratch' or <= 30 words on what components (modules/blocks/flows/...) remained unchanged compared to the prior best model version.
Upgrade vs. Previous: <= 60 words on what changed, and proposed, why it helps, and expectations.  
'''

### Inline Commenting
- Annotate every argument in the **main module's** `forward` method with its expected type & shape, **and semantic role**, e.g.  
  `past_input_ids: Optional[torch.LongTensor] = None,  # [B, past_seq_len, token_len] - past appointment history sequence`
  `pixel_values: torch.FloatTensor,  # [B, X_dim, Y_dim, Z_dim] - pixels of normalized 3D T1-weightd MRI brain image`
- Track Output Sizes of Key Modules and Key layers, e.g.
  x = self.conv1(x)  # [B, 3, 224, 224] -> [B, 64, 112, 112]
\end{CodeBlock}

\end{document}